\documentclass[twoside]{article}

\usepackage[accepted]{aistats2023}

\input{macros}

\begin{document}

\twocolumn[

\aistatstitle{Manifold Restricted Interventional Shapley Values}

\aistatsauthor{ Muhammad Faaiz Taufiq\footnotemark[1] \And Patrick Bl\"{o}baum \And Lenon Minorics }

\runningauthor{Muhammad Faaiz Taufiq, Patrick Bl\"{o}baum, Lenon Minorics}

\aistatsaddress{
University of Oxford 
\And  
Amazon Research 
\And 
Amazon Research } ]

\footnotetext[1]{Work done during internship at Amazon Research.  Correspondence to: \texttt{muhammad.taufiq@stats.ox.ac.uk}.}

\begin{abstract}
Shapley values are model-agnostic methods for explaining model predictions. Many commonly used methods of computing Shapley values, known as \emph{off-manifold methods}, rely on model evaluations on out-of-distribution input samples. Consequently, explanations obtained are sensitive to model behaviour outside the data distribution, which may be irrelevant for all practical purposes. While \emph{on-manifold methods} have been proposed which do not suffer from this problem, we show that such methods are overly dependent on the input data distribution, and therefore result in unintuitive and misleading explanations. To circumvent these problems, we propose \emph{ManifoldShap}, which respects the model's domain of validity by restricting model evaluations to the data manifold. We show, theoretically and empirically, that ManifoldShap is robust to off-manifold perturbations of the model and leads to more accurate and intuitive explanations than existing state-of-the-art Shapley methods.
\end{abstract}

\section{INTRODUCTION}
Explaining model predictions is highly desirable for reliable applications of machine learning. This is especially important in risk-sensitive settings like medicine and credit scoring \citep{medicineshap, medicinexai, medicine-interp, credit-scoring} where an incorrect model prediction could prove very costly. Explainability is becoming increasingly relevant because of regulations like the General Data Protection Regulation \citep{gdpr}, which may require being able to explain model predictions before deploying a model in the real world. 
This is less of a challenge in models like linear models and decision trees, which tend to be easier to interpret. 
However, the same is not true for more complex models like Neural Networks, where explaining predictions may not be straightforward \citep{why-should-i-trust}.

Explainable AI is an area of machine learning which aims to provide methodologies for interpreting model predictions. Various different techniques of explaining models have been proposed, with each approach satisfying different properties \citep{xai-review}. In this paper, we focus on Shapley values \citep{shap1, shap2, shap-og}, a popular approach for quantifying feature relevance, which is model-agnostic, i.e., is independent of model implementation. Additionally, this is a local explanation method, i.e., it can be used to explain individual model predictions. Shapley values are based on ideas from cooperative game theory \citep{game-theory} and come with various desirable theoretical properties \citep{kernelshap} which make it a very attractive method in practice.

At a high-level, Shapley values treat features as `players' in a game, where the total payout is the model prediction at a given point. To quantify the feature importance, this method distributes the total payout among each player in a `fair' manner using a \emph{value} function. Different types of Shapley value functions have been proposed which differ in the way they distribute payout among players \citep{kernelshap, expondatamanifold}. These can be broadly divided into two categories: (i) \emph{on-manifold} value functions, which only depend on the model behaviour on the input data distribution, and (ii) \emph{off-manifold} value functions which also depend on the model behaviour outside the input data distribution. 

Off-manifold Shapley values are not robust to changes in model behaviour outside the data distribution. This means that the explanations obtained using these methods may be highly influenced if the model behaviour outside the data distribution changes, even if it remains fixed on the data distribution \citep{expondatamanifold, foolingshap, on-manifold-off-manifold}. 
Such changes to the model can change the Shapley values drastically, resulting in misleading explanations, and can even be used to hide model biases.
On the other hand, while the on-manifold Shapley values are robust to such model perturbations, the explanations obtained are highly sensitive to changes in the feature distribution. Additionally, these methods do not capture the \emph{causal} contribution of features as they attribute importance based on feature correlations. 
For example, we show that on-manifold Shapley values can be `fooled' into attributing similar importance to two positively correlated features, even if the model depends on only one of them.

In this paper, we bridge this gap between \emph{on-manifold} and \emph{off-manifold} Shapley values by proposing ManifoldShap (illustrated in Figure \ref{fig:manifoldshap}), a Shapley value function, which remains robust to changes in model behaviour outside the data distribution, while estimating the \emph{causal} contribution of features. We show that ManifoldShap is significantly less sensitive to changes in the feature distribution than other on-manifold value functions. We extend the formal notion of robustness in \citet{on-manifold-off-manifold} by providing an alternative definition which may be more desirable in many cases. We additionally show that our proposed method satisfies both notions of robustness, while other methods do not. Moreover, ManifoldShap satisfies a number of other desirable properties which we verify theoretically and empirically on real-world datasets.

\begin{figure}[t]
    \centering
    \includegraphics[height=1.5in]{./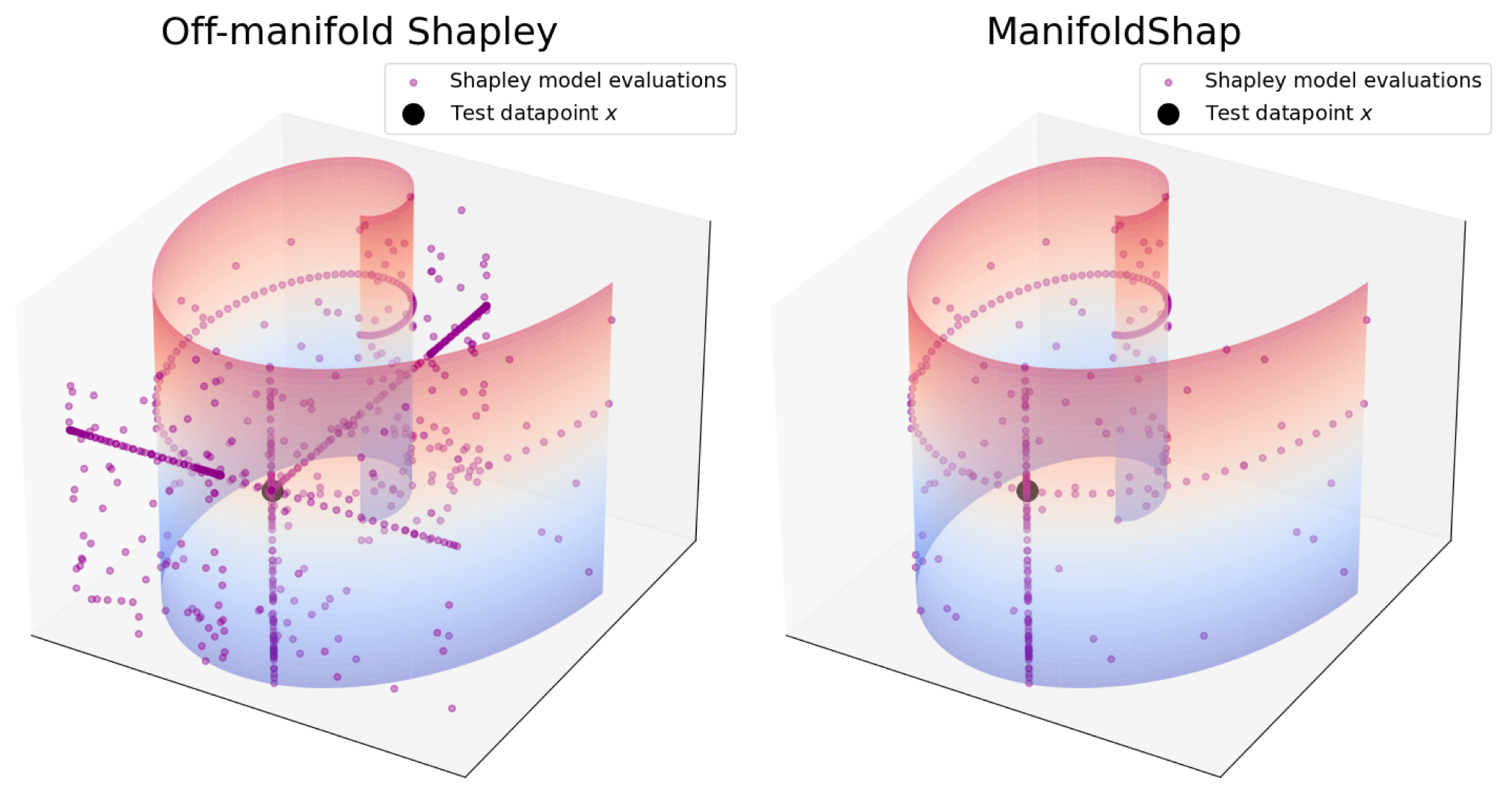}
    \caption{The datapoints at which model is evaluated when computing Shapley values for test point $\x$, along with the data manifold. Off-manifold methods evaluate the model outside the data manifold whereas our proposal, ManifoldShap, restricts model evaluations to the data manifold.}
    \label{fig:manifoldshap}
\vspace{-0.2in}
\end{figure}

	\section{SHAPLEY VALUES}
	In this section, we will introduce Shapley values for model explainability.
	For any given model $f:\mathcal{X} \rightarrow \mathcal{Y}$, our goal is to obtain localised model explanations at a given point $\x \in \mathcal{X}$. We assume that $\mathcal{X} \subseteq \mathbb{R}^d$ and $\mathcal{Y} \subseteq \mathbb{R}$. 
	
	Shapley values \citep{shap1, shap2, shap-og} provide a natural tool for obtaining such explanations. 
	For a specific input $\x$, Shapley values define a way of distributing the difference between $f(\x)$ and a baseline, which we denote as $b_0$, among the $d$ input features. 
	This can naturally be interpreted as the contribution of each feature towards the difference $f(\x) - b_0$, and is commonly referred to as feature attributions. 
	One possible choice of baseline explored in the literature is the model evaluated at an auxiliary input $\x'$, i.e., $b_0 = f(\x')$. Alternatively, many methods use the average model output $\E[f(\X)]$ as the baseline, i.e., $b_0 = \E[f(\X)]$. This can be used to explain \emph{why} the output at a point $\x$ deviates from the average output. The average output provides a more intuitive and interpretable baseline compared to the choice of an auxiliary input $\x'$, which can be arbitrary. In this work, we therefore restrict our attention to the latter category.

	As an example, consider a model which predicts an individual's salary, with input features corresponding to individual's information. If feature $i \in [d]$ represents the age of the individual, the attribution for feature $i$, which we will denote as $\atr{i}{f}$, tells us the contribution of individual's age to the salary prediction for $\x$, relative to the average salary prediction, i.e., $f(\x) - \E[f(\X)]$.
	To compute the contribution for feature $i$ at $\x$, Shapley values consider a value function $v: 2^{[d]} \rightarrow \mathbb{R}$ where $v$ may implicitly depend on $\x$. Given a subset $S\subseteq[d]\setminus\{i\}$, we can intuitively interpret the difference $v(S\cup \{i\}) - v(S)$ as the contribution of feature $i$ w.r.t. the set $S$. 
	Next, the Shapley values for feature $i$ is defined as a weighted sum over all possible subsets $S$:
	\[
	\atr{i}{f} \coloneqq \sum_{S \subseteq [d] \setminus \{i\}} \frac{|S|!(d-|S|-1)!}{d!} (v(S\cup \{i\}) - v(S)).
	\]
	The quantity $\phi_i$ can be intuitively considered as the average contribution of feature $i$ to the prediction at $\x$.
	In order for the explanations obtained to be interpretable and intuitive, the value function $v$ must be chosen such that it satisfies a number of desirable properties.
	We present some of the most important such properties here\done{comment about symmetry}:
	\begin{enumerate}
	    \item \textit{Sensitivity:} If $f$ does not depend on $x_i$, then $v(S\cup \{i\}) = v(S)$, and hence $\atr{i}{\x}=0$.
	    \item \textit{Symmetry:} If $f$ is symmetric in components $i$ and $j$ and $x_i = x_j$, then $v(S\cup \{i\}) = v(S\cup \{j\})$ and hence $\phi_i = \phi_j$.
	    \item \textit{Efficiency:} If $\atr{i}{\x}$ denotes the attribution of feature $i$ to $f(\x) - \E[f(\X)]$, then $v([d])-v(\emptyset) = f(\x) - \E[f(\X)]$ and hence,
	    $
	    \sum_i \phi_i = f(\x) - \E[f(\X)].
	    $
	\end{enumerate}
	 Next, we present various commonly used value functions, which can be classified into \emph{off-manifold} and \emph{on-manifold} value functions.
	
	\subsection{Off-Manifold Value Functions}
	This class of value functions does not restrict function evaluations to the data distribution, and consequently, computing Shapley values involves evaluating the model on out-of-distribution inputs, where the model has not been trained (see Figure \ref{fig:manifoldshap}). The most commonly used off-manifold value function is Marginal Shapley (MS) (also called RBShap \citep{kernelshap}):
	\paragraph{Marginal Shapley (MS).}
	\[
	v^{\textup{MS}}_{\x, f}(S) \coloneqq \E[f(\textbf{x}_S, \textbf{X}_{\bar{S}})].
	\]
	Specifically, Marginal Shapley takes the expectation of $f(\textbf{x}_s, \textbf{X}_{\bar{S}})$ over the marginal density of $\textbf{X}_{\bar{S}}$.
	
	In addition to this, there has been some recent work proposing a causal perspective when computing Shapley values \citep{lshap, causalshap, jung2022}. Specifically, these works observe that manually fixing the values of features $\X_S$ to $\x_S$ when computing Shapley values, corresponds to \emph{intervening} on the feature values. In Pearl's do calculus \citep{pearl, pearl2012the}, this is expressed as $do(\X_S = \x_S)$. This leads to the definition of Interventional Shapley (IS) value functions:
	\paragraph{Interventional Shapley (IS).}
	\begin{align}
		v^{\textup{IS}}_{\x, f}(S) \coloneqq \E[f(\X) \mid do(\X_S = \x_S)]. \label{causalshap}
	\end{align}

	A detailed discussion of how Interventional Shapley differs from other \textit{non-causal} value functions has been deferred to Section \ref{subsec:limitations-on-man}.
	How to compute $v^{\textup{IS}}_{\x, f}(S)$ depends on the causal structure of the features. \citet{lshap} only consider the causal relations between the function inputs and outputs, rather than between the real-world features and the true output $Y$. This corresponds to the set-up in Figure \ref{fig:dag}, where the true feature values $\tilde{X}_i$ are formally distinguished from the features $X_i$ input into the function, $f$, with $X_i$ being a direct causal descendant of $\tilde{X}_i$ and no interactions between $X_i$. In this set-up, intervening on $\X_S$ yields the following interventional distribution:
	\begin{align*}
		p(\X_{\bar{S}} \mid do(\X_S = \x_S) ) = p(\X_{\bar{S}}).
	\end{align*}
	In this case, the value function, $v^{\textup{IS}}_{\x, f}(S)$ can straightforwardly be computed as 
	\begin{align*}
		v^{\textup{IS}}_{\x, f}(S) \hspace{-0.1cm}=\hspace{-0.1cm} \E[f(\X) \mid do(\X_S = \x_S)] \hspace{-0.1cm} = \hspace{-0.1cm} \E_{\X_{\bar{S}} \sim p(\X_{\bar{S}})}[f(\x_S, \X_{\bar{S}})].
	\end{align*}
 \begin{figure}[ht]
	    \centering
	    \includegraphics[height=1.2in]{./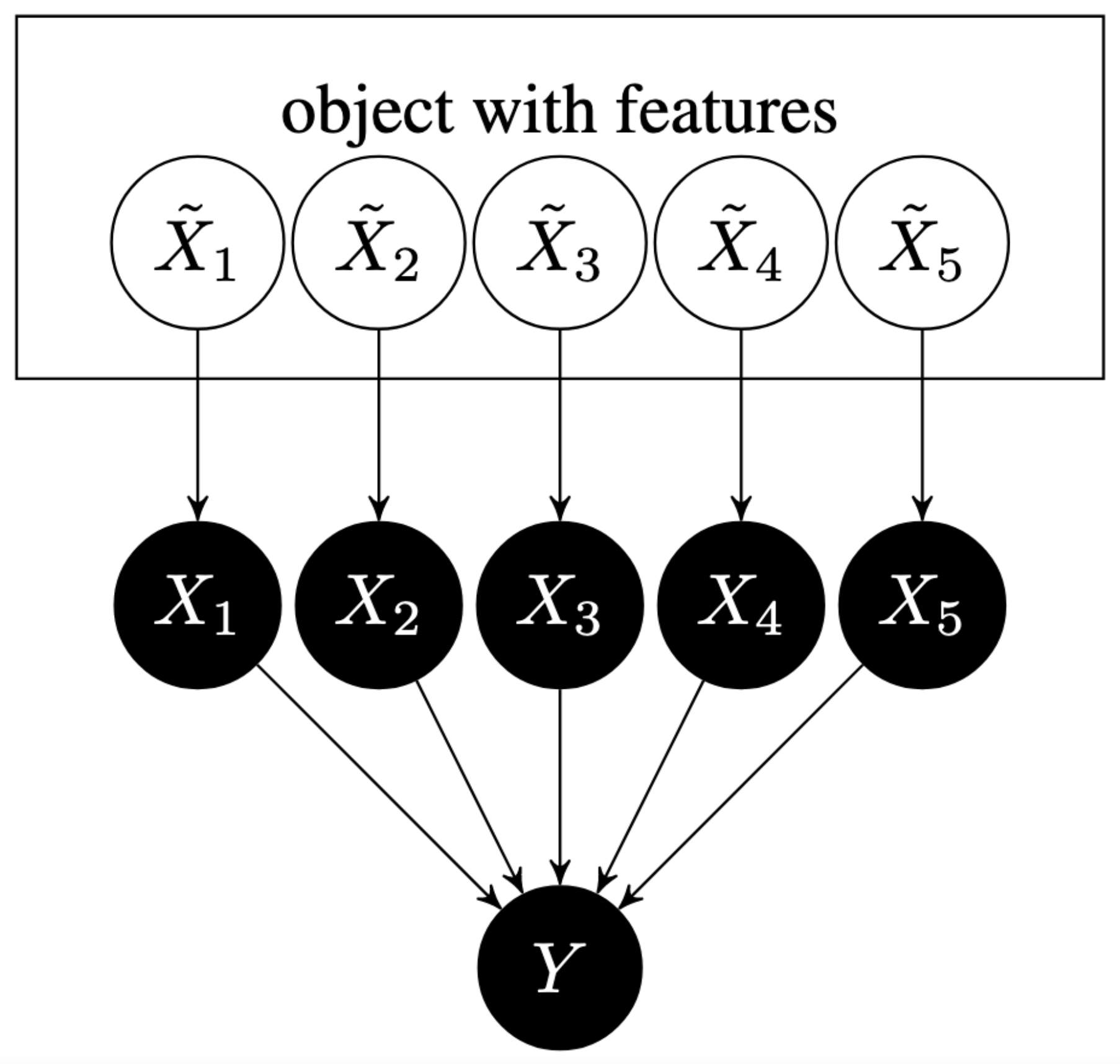}
	    \caption{Causal structure considered in \citet{lshap}. The true features are $\tilde{X}_i$ while features input into the model are $X_i$.}
	    \label{fig:dag}
	\end{figure}
 
	This is equivalent to Marginal Shapley. Therefore, Marginal Shapley can be considered a special case of Interventional Shapley. In contrast, \citet{causalshap} seeks to estimate the causal contributions of the real-world features towards the true output $Y$, and therefore,
	does not distinguish between the true features and the features input into the model. The resulting IS value function also takes into account the causal relations among the true features themselves.
	
	\subsection{On-Manifold Value Functions}
	These value functions only rely on function values in data distribution when computing Shapley values. As a result, any changes in the function outside data distribution does not change the explanations obtained. One of the first on-manifold value functions proposed was Conditional Expectation Shapley (CES) \citep{kernelshap}: 
	\paragraph{Conditional Expectation Shapley (CES).}
	\[
	v_{\textbf{x}, f}^{\textup{CES}}(S) \coloneqq \E[f(\textbf{X}) \mid \textbf{X}_S = \textbf{x}_S ].
	\]
	Unlike Marginal Shapley, CES takes the expectation of $f(\x_s, \textbf{X}_{\bar{S}})$ over the conditional density of $\textbf{X}_{\bar{S}}$ given $\textbf{X}_S = \textbf{x}_S$ (and not the marginal density of $\textbf{X}_{\bar{S}}$). This has undesired implications for the obtained Shapley values, which we discuss in detail in Section \ref{subsec:limitations-on-man}. 
	
	Apart from this, recently \citet{on-manifold-off-manifold} proposed Joint Baseline Shapley (JBShap), a value function which aims to make Shapley values robust to model changes in regions of low data-density. This value function explicitly takes the density $p(\x)$ into consideration when calculating explanations:
	
    \paragraph{Joint Baseline Shapley (JBShap).}
    \[
    v_{\textbf{x}, f, p}^{\textup{J}}(S) \coloneqq f(\textbf{x}_S, \textbf{x}'_{\bar{S}})p(\x_S, \x'_{\bar{S}}),
    \]
    where $\x'$ is an auxiliary baseline. The authors also propose an extension of JBShap, called \emph{Random Joint Baseline Shapley} (RJBShap) where the value function averages over all possible baseline values:
    \paragraph{Random Joint Baseline Shapley (RJBShap).}
    \[
    v_{\textbf{x}, f, p}^{\textup{RJ}}(S) \coloneqq \E_{p_b(\X_{\bar{S}})}[f(\textbf{x}_S, \textbf{X}_{\bar{S}})p(\x_S, \X_{\bar{S}})].
    \]
    Here, $p_b(\X_{\bar{S}})$ is some prior distribution over features $\x'_{\bar{S}}$. A natural choice of prior is the marginal density $p(\X_{\bar{S}})$, which we use to compute RJBShap later.
    
    Having listed the most relevant on and off manifold value functions, we discuss their limitations in the following sections. This will motivate our proposal of an alternative value function, which aims to circumvent these limitations.
    
    \subsection{Limitations of off-manifold value functions}
    As \citet{foolingshap, expondatamanifold} point out, dependence of Shapley explanations on off-manifold behaviour of the model can be problematic. For example, computing Interventional Shapley at $\x$ requires evaluating the model at points $(\x_S, \X_{\bar{S}})$ for $S\subseteq[d]$ where $\X_{\bar{S}} \sim p(\X_{\bar{S}} \mid do(\X_S = \x_S))$. Such points may lie outside the distribution of training data, where the model was not trained. 
    Consider a model which is identical to the ground truth function on the data distribution. The train/test errors of the model will be 0, suggesting that it captures the ground truth function perfectly. However, if the model differs from the ground truth outside the data distribution, the model's Shapley values may be drastically different from the ground truth Shapley values, resulting in highly misleading explanations.

    This limitation of off-manifold Shapley values can be exploited to `fool' Shapley values into hiding model biases. In \citet{foolingshap}, the authors consider models which are highly biased on the data manifold (i.e., solely rely on sensitive features, like racial background, for predictions). They show that these models can be perturbed outside the data manifold in such a way that the resulting Shapley values give no attribution to the sensitive features, despite the models relying solely on these sensitive features on the data manifold. Therefore, off-manifold Shapley values are highly vulnerable to off-manifold manipulations.
    
    \subsection{Limitations of on-manifold value functions}\label{subsec:limitations-on-man}
    While the on-manifold value functions do not consider model behaviour outside data distribution, the existing methods can lead to unintuitive or misleading Shapley explanations as they do not consider the \textit{causal} contributions of features, and are highly sensitive to feature correlations. Specifically, as \citet{lshap} point out, when computing feature contributions at $\x$, the value function for a subset $S$, $v(S)$, must capture the effect of fixing the feature values $\X_S$ to $\x_S$. This is \emph{not} given by $\E[f(\X) \mid \X_S = \x_S]$ as in CES, because observing $\X_S = \x_S$ also changes the distribution of $\X_{\bar{S}}$. Instead, the impact of setting $\X_S$ to $\x_S$ is captured by $\E[f(\X) \mid do(\X_S = \x_S)]$, which in general is different from conditional expectation. Therefore, Interventional Shapley is inherently proposed to capture the \emph{causal} effect of fixing feature values. 
    
    Since CES considers the conditional expectation $\E[f(\X) \mid \X_S = \x_S]$ when computing Shapley values, the resulting Shapley values are highly influenced by feature correlations. As a result, two highly correlated features may receive similar feature attributions even if the model under consideration depends on only one of them. We make this concrete with an example in Appendix \ref{sec:int-vs-ces}\done{add RJBShap to this}. We also demonstrate empirically in Section \ref{sec:exps} and Appendix \ref{sec:exps-app} that CES can be highly sensitive to the feature correlations, and consequently can lead to wrong explanations. Additionally, computing CES is computationally challenging when the feature-space is continuous. While \citet{expondatamanifold} propose training a surrogate model $g$ with masked inputs to estimate the conditional expectation (see Appendix \ref{subsec:CES-comp}),  training $g$ is even more difficult than training the model $f$.
	
	Aside from this, the JBShap and RJBShap value functions proposed by \citet{on-manifold-off-manifold}, explain the feature contributions for the function $\tilde{f}_p(\x) \coloneqq f(\x)p(\x)$, rather than $f(\x)$ itself. Specifically, RJBShap explain the contribution of individual features towards the difference $\tilde{f}_p(\x) - \E_{p_{b}(\X)}[\tilde{f}_p(\X)]$.
	This means that the resulting Shapley values therefore do not explain the underlying function $f$ itself.
 We make this more concrete with an example with $\Xspace \subseteq \mathbb{R}^2$:
 \begin{align}
     \X \sim \mathcal{N}(\textbf{0}, I_2), \quad f(\x) = \exp{\left(x^2_1/2\right)}. \label{eq:rjbshap-example}
 \end{align}
 For this example, $\tilde{f}_p(\x)$ only depends on $x_2$ and consequently, the RJBShap values for feature 1, $\phi_1 = 0$, for all $\x \in \mathcal{X}$, even though the function $f(\x)$ \emph{only} depends on $x_1$. RJBShap can therefore lead to \emph{highly} misleading explanations. We confirm this empirically in Appendix \ref{subsec:rjbshap}.
	Additionally, the notion of off-manifold robustness satisfied by JBShap and RJBShap value functions can be restrictive. We expand upon this in Section \ref{subsec-robustness}, where we propose an alternative definition of robustness which is less restrictive, and is not satisfied by JBShap and RJBShap.
	
	\done{Retrieve the following commented out bit}
	\done{More stress on the computational aspect}

 \section{MANIFOLD RESTRICTED SHAPLEY VALUES}
In this paper, we argue that a model must be mainly characterised by it's behaviour on the data manifold. While \emph{intervening} on features provides the correct notion of fixing features, we must restrict our attention to the data manifold when estimating Shapley values. This allows us to avoid the issues of non-identifiability outside the data manifold, thereby making the Shapley estimates robust against adversarial attacks as in \citet{foolingshap}.  
	In order to estimate Shapley values which are robust to off-manifold manipulations, we must restrict the function evaluation to the data manifold. Before we proceed, we introduce our value function in terms of general sets $\Z \subseteq \mathcal{X}$.
	
	\begin{definition}[ManifoldShap]
	    Let $\Z \subseteq \mathcal{X}$ be an open set with $\x \in \Z$, and $\p(\X\in\Z \mid do(\X_S = \x_S)) > 0$ for $S\subseteq [d]$. Then, we define the ManifoldShap on $\Z$ as:
	    \begin{align}
	        \valgeneric{f}{\Z}(S) \coloneqq \valfunset{S}{f}{\Z}. \label{val_fun}
	    \end{align}
	\end{definition}
	\done{In Appendix \ref{proofs} we prove that under mild regularity assumptions, the value function \eqref{val_fun} is well-defined. IMPORTANT: Prove this.}
	\textbf{Remark.} The notation $\E[\cdot \mid do(\X_S =\x_S), \X\in \Z]$ denotes the expectation w.r.t. the density $p_{\Z, \x_S}(\cdot)$ where
	   \begin{align}
	       p_{\Z, \x_S}(\y) \coloneqq \frac{p(\y\mid do(\X_S =\x_S))\ind(\y\in\Z)}{\p(\X\in\Z \mid do(\X_S = \x_S))}. \label{man-density}
	   \end{align}

    The condition $\p(\X\in\Z \mid do(\X_S = \x_S)) > 0$ ensures that $p_{\Z, \x_S}(\x)$ (and hence $\valgeneric{f}{\Z}(S)$) is well-defined.
	By conditioning on the event $\X \in \Z$, the ManifoldShap value function restricts the function evaluations to the set $\Z$. In practice, $\Z$ can be 
	chosen to be the data manifold, or any other region of interest, where model behaviour is relevant to explanations sought. In this way, ManifoldShap will disregard the model behaviour outside the region of interest when computing Shapley values. A detailed discussion of how to choose the sets $\Z$ is deferred to the next section.
	
	Our formulation of \emph{ManifoldShap} is general as it is does not assume a specific causal structure on the features. In our methodology, we assume that the expectation $\E[f(\textbf{X}) \mid do(\textbf{X}_S = \textbf{x}_S)]$ can be computed using observational data. This is a standard assumption needed to compute Interventional Shapley, and holds true under the causal structure in Figure \ref{fig:dag}. Under this assumption, we can compute the value function using the following result. 
	\begin{lemma}\label{manifoldShap}
		The value function $\valgeneric{f}{\Z}$ can be written as,
		\begin{align*}
			\valgeneric{f}{\Z}(S) = \frac{\E[f(\textbf{X}) \ind(\X \in \Z) \mid do(\textbf{X}_S = \textbf{x}_S)]}{\p(\X \in \Z \mid do(\textbf{X}_S = \textbf{x}_S))}
		\end{align*}
	\end{lemma}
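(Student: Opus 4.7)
The plan is to unroll the definition of the ManifoldShap value function directly, using the density $p_{\Z, \x_S}$ introduced in the remark immediately after Definition 1. Recall that by definition,
\begin{align*}
\valgeneric{f}{\Z}(S) = \E[f(\X) \mid do(\X_S = \x_S),\, \X \in \Z] = \int f(\y)\, p_{\Z, \x_S}(\y)\, d\y.
\end{align*}

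First, I would substitute expression \eqref{man-density} for $p_{\Z, \x_S}(\y)$. Since the normaliser $\p(\X \in \Z \mid do(\X_S = \x_S))$ does not depend on the integration variable $\y$, it pulls out of the integral, giving
\begin{align*}
\valgeneric{f}{\Z}(S) = \frac{1}{\p(\X \in \Z \mid do(\X_S = \x_S))} \int f(\y)\, \ind(\y \in \Z)\, p(\y \mid do(\X_S = \x_S))\, d\y.
\end{align*}

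Next, I would recognise that the remaining integral is precisely the definition of an expectation under the interventional distribution $p(\cdot \mid do(\X_S = \x_S))$, namely
\begin{align*}
\int f(\y)\, \ind(\y \in \Z)\, p(\y \mid do(\X_S = \x_S))\, d\y = \E[f(\X)\, \ind(\X \in \Z) \mid do(\X_S = \x_S)].
\end{align*}
Combining this with the previous display yields the claimed identity.

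The only subtlety worth flagging is that everything is well-posed: the hypothesis $\p(\X \in \Z \mid do(\X_S = \x_S)) > 0$ in Definition 1 guarantees that the denominator is nonzero and that $p_{\Z, \x_S}$ is a bona fide probability density, so both sides of the identity are finite and the rearrangement by Fubini/linearity is legitimate whenever $f$ is integrable with respect to the interventional distribution restricted to $\Z$. There is no real technical obstacle here, as the lemma is essentially the standard $\E[X \mid A] = \E[X \ind_A] / \p(A)$ identity applied to the interventional measure.
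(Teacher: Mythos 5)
Your proof is correct and takes essentially the same route as the paper's: both unroll the definition via the density $p_{\Z, \x_S}$ from Eq.~\eqref{man-density}, pull the constant normaliser $\p(\X \in \Z \mid do(\X_S = \x_S))$ out of the integral, and identify the remaining integral as the interventional expectation $\E[f(\X)\ind(\X \in \Z) \mid do(\X_S = \x_S)]$. Your closing remark on well-posedness via the positivity condition is a sensible addition but does not alter the argument.
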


    In practice, all we need is a manifold classifier, trained to estimate the value of the indicator, i.e. $\hat{g}(\textbf{x}) \approx \ind(\textbf{x} \in \Z)$.  The value function \eqref{val_fun} can then be estimated using:
	\begin{align}
		\valgeneric{f}{\Z}(S) %
		 &\approx \frac{\E[f(\X) \hat{g}(\X) \mid do(\textbf{X}_S = \textbf{x}_S)]}{\E[\hat{g}(\X) \mid do(\textbf{X}_S = \textbf{x}_S)]}. \label{val_fun_approx}
	\end{align}
	We also provide alternative methodologies of estimating ManifoldShap using rejection sampling and regression techniques in Appendix \ref{subsec:manshap-alternative-methods}.

	\paragraph{Choosing the sets $\Z$.}
	Next, we discuss general purpose methodologies of choosing sets $\Z$ which can serve as practical estimation of the data manifold in most cases. One can obtain $\Z$ by training an out-of-distribution classifier directly. \citet{foolingshap} do so by perturbing each datapoint on randomly chosen features, and subsequently using these to train the classifier. In general, users may wish to choose different regions of interest $\Z$ on an ad hoc basis when computing Shapley values. In what follows, we outline a few specific choices of $\Z$, each of which satisfy different notions of robustness to off-manifold manipulations. We discuss this in greater length in Section \ref{subsec-robustness}.
	\begin{definition}[Density manifold]\label{den-manifold}
		Given an $\epsilon > 0$, we define the \emph{$\epsilon$-density manifold} (\emph{$\epsilon$-DM}) of the data distribution, denoted as $\man$, as:
		$
		\man \coloneqq \{\textbf{x}\in \mathbb{R}^d : p(\textbf{x}) > \epsilon \}.
		$
		Here, $p(\textbf{x})$ denotes the joint density of the data.
	\end{definition}
	The $\epsilon$-DM includes all regions of high density in the set.
	Using $\Z = \man$ in our value function therefore restricts function evaluations to regions of high density.
	An alternative way to choose $\Z$ is via the probability mass captured by $\Z$, i.e., for a given level $\alpha$, we may pick sets $\Z=\alphaman$ such that $\p(\X\in \alphaman)\geq \alpha$. One such set can be defined as:
	\begin{definition}[Mass manifold]\label{mass-manifold}
	    Given an $\alpha > 0$, we define the \emph{$\alpha$-mass manifold} ($\alpha$-MM) of the data distribution, denoted as $\alphaman$, as $\alphaman \coloneqq \mathcal{D}_{\epsilon^{(\alpha)}}$, where 
	    $
        \epsilon^{(\alpha)} \coloneqq \sup \{\epsilon \geq 0: \p(\X\in\man)\geq \alpha \}.
        $
	\end{definition}
	We show in Proposition \ref{optimality} (Appendix \ref{proofs}) that the Lebesgue measure of $\alphaman$ is smallest among the sets $\Z$ with $\p(\X \in \Z) \geq \alpha$. It should be noted that $\alphaman$ is not necessarily the unique such set. 
	One can use techniques like kernel density estimation and VAEs to approximate the manifolds described in this section (more details in Appendix \ref{subsec:computingman}). 

	\subsection{Robustness to off-manifold manipulation}\label{subsec-robustness}
	We say that a Shapley value function is robust to off-manifold manipulation, if changing the model $f$ outside the data manifold does not lead to `large' changes in its Shapley values. In this section, we formalise this idea of robustness and show that ManifoldShap satisfies this notion, while the existing value functions do not.
	First, we present the definition of robustness as used in \citet{on-manifold-off-manifold}, to formalise the notion of off-manifold manipulations.
	\begin{definition}[T-robustness \citep{on-manifold-off-manifold}]\label{trobustness}
		Given two models $f_1(\textbf{x}), f_2(\textbf{x})$ and any probability density $p(\textbf{x})$, we say that a value function, $v_{\textbf{x}, f}$, is strong T-robust if it satisfies the following condition: if $\max_{\textbf{x}} | f_1(\textbf{x}) - f_2(\textbf{x}) | p(\textbf{x}) \leq \delta$, then, $| v_{\textbf{x}, f_1}(S) - v_{\textbf{x}, f_2}(S)| \leq T \delta$ for any $S \subseteq [d]$.
	\end{definition}
	
	As per \citet{on-manifold-off-manifold},``The premise $\max_{\textbf{x}} | f_1(\textbf{x}) - f_2(\textbf{x}) | p(\textbf{x}) \leq \delta$ bounds the maximum perturbation on low density regions." Additionally, \citet{on-manifold-off-manifold} show that JBShap and RJBShap value functions satisfy strong T-robustness to off-manifold manipulation, while other value functions like MS and CES do not. Likewise, since MS is a special case of IS, the latter also does not satisfy strong T-robustness. On the other hand, ManifoldShap restricted to $\epsilon$-density manifold, $\man$, satisfies this notion of robustness. 
	\begin{proposition}\label{t-robust}
		The value function $\valgeneric{f}{\man}(S) = \valfunset{S}{f}{\man}$ is strong $T$-robust for $T = 1/\epsilon$.
	\end{proposition}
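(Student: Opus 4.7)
The plan is to exploit the explicit formula from Lemma \ref{manifoldShap} and use the fact that, by definition of $\man$, the density is pointwise bounded below by $\epsilon$ on the region where the integrand is supported.

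First I would write both $\valgeneric{f_1}{\man}(S)$ and $\valgeneric{f_2}{\man}(S)$ using Lemma \ref{manifoldShap}. Since they share the same denominator $\p(\X \in \man \mid do(\X_S = \x_S))$, subtracting gives
\begin{align*}
\valgeneric{f_1}{\man}(S) - \valgeneric{f_2}{\man}(S) = \frac{\E[(f_1(\X) - f_2(\X)) \ind(\X \in \man) \mid do(\X_S = \x_S)]}{\p(\X \in \man \mid do(\X_S = \x_S))}.
\end{align*}
Applying Jensen's inequality (equivalently, moving absolute values inside the expectation) to the numerator bounds the above in absolute value by
\begin{align*}
\frac{\E[|f_1(\X) - f_2(\X)| \ind(\X \in \man) \mid do(\X_S = \x_S)]}{\p(\X \in \man \mid do(\X_S = \x_S))}.
\end{align*}

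Next I would use the key observation: on the event $\{\X \in \man\}$ we have $p(\X) > \epsilon$ by Definition \ref{den-manifold}, so the hypothesis $\max_{\x} |f_1(\x) - f_2(\x)| p(\x) \leq \delta$ yields the pointwise bound $|f_1(\X) - f_2(\X)| \leq \delta/p(\X) < \delta/\epsilon$ on $\man$. Substituting into the numerator gives
\begin{align*}
\E[|f_1(\X) - f_2(\X)| \ind(\X \in \man) \mid do(\X_S = \x_S)] \leq \frac{\delta}{\epsilon} \cdot \p(\X \in \man \mid do(\X_S = \x_S)),
\end{align*}
after which the denominator cancels and we obtain $|\valgeneric{f_1}{\man}(S) - \valgeneric{f_2}{\man}(S)| \leq \delta/\epsilon = T\delta$, as required.

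There is no real obstacle here; the argument is essentially a one-line calculation once Lemma \ref{manifoldShap} is in hand. The only subtle point to flag is that the denominator is guaranteed to be strictly positive by the standing assumption $\p(\X \in \man \mid do(\X_S = \x_S)) > 0$ in the definition of ManifoldShap, which ensures the cancellation is legal. It is also worth observing that the bound is uniform over $S$, so the robustness statement holds for every coalition.
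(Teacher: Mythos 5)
Your proof is correct and follows essentially the same route as the paper's: both arguments hinge on the observation that $p(\x) > \epsilon$ on $\man$ converts the hypothesis $\max_{\x}|f_1(\x)-f_2(\x)|p(\x)\leq\delta$ into the uniform bound $|f_1(\x)-f_2(\x)|\leq\delta/\epsilon$ on the manifold, and then push this bound through the manifold-restricted conditional expectation via the triangle inequality. The only cosmetic difference is that you route through the ratio form of Lemma \ref{manifoldShap} explicitly, whereas the paper works directly with the notation $\E[\cdot \mid do(\X_S=\x_S), \X\in\man]$ — these are the same object by Eq. \eqref{man-density}.
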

	Proposition \ref{t-robust} shows that with decreasing $\epsilon$, the robustness parameter $T$ increases and ManifoldShap gets less robust.

\paragraph{Alternative definition of Robustness.}

Definition \ref{trobustness} considers a very specific notion of model perturbation. In particular, the perturbation in model $f(\x)$ must not exceed $\delta/p(\x)$ for all $\x \in \mathbb{R}^d$ and some $\delta>0$. This does not encapsulate the case where the function perturbation remains bounded on a region of interest $\Z$, but may increase arbitrarily outside $\Z$. For example, we may have the case that the function $f(\x)$ remains fixed on a set $\Z$ with $\p(\X \in \Z) > 0.99$. Robustness of Shapley values should dictate that changing the function outside $\Z$ should not lead to arbitrarily different Shapley values. We later show that Def. \ref{trobustness} does not lead to such robustness guarantees. \done{show?}
To encapsulate this, we provide an alternative definition of robustness, which allows us to take into account model manipulation on sets with small probability mass. 
First, we define the notion of robustness on a general feature subspace $\Z'\subseteq \Xspace$:

\begin{definition}[Subspace T-robustness]\label{subspacerobustness}
    Let $\Z'\subseteq\mathcal{X}$ be such that $\p(\X\in\Z')>0$. We say that a value function $v_{\textbf{x}, f}$ is strong T-robust on subspace $\Z'$ if it satisfies the following condition: if $\sup_{\x\in\Z'} | f_1(\textbf{x}) - f_2(\textbf{x}) | \leq \delta$, then, $| v_{\textbf{x}, f_1}(S) - v_{\textbf{x}, f_2}(S)| \leq T \delta$ for any $S \subseteq [d]$.
\end{definition}

A value function satisfying strong T-robustness on $\Z$ would not result in drastically different Shapley values when the model perturbation is bounded on the set $\Z$, by some value $\delta > 0$.
The above definition allows us to directly consider robustness of value functions on sets based on probability mass, $\alphaman$. Moreover, by restricting the function evaluations to a set $\Z$, ManifoldShap is naturally set up to provide subspace T-robustness guarantee. We formalise this as follows:
\begin{proposition}\label{manshap-subspace-robustness}
    The value function $\valgeneric{f}{\Z}$ is strong T-robust on any set $\Z'$ satisfying $\Z\subseteq \Z'$ with $T = 1$.
\end{proposition}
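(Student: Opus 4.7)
The plan is to unfold the definition of $\valgeneric{f}{\Z}$ and observe that the inner integration is supported entirely inside $\Z$, so the hypothesis $\sup_{\x\in\Z'}|f_1(\x)-f_2(\x)|\le\delta$ is automatically effective pointwise over the entire support of the conditional distribution. Concretely, I would start by writing
\begin{align*}
\valgeneric{f_1}{\Z}(S) - \valgeneric{f_2}{\Z}(S)
= \E[f_1(\X) - f_2(\X) \mid do(\X_S=\x_S),\ \X\in\Z],
\end{align*}
using linearity of the conditional expectation defined with respect to the density $p_{\Z,\x_S}$ in \eqref{man-density}.

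Next, I would apply the triangle inequality for conditional expectations (Jensen applied to $|\cdot|$) to obtain
\begin{align*}
\bigl|\valgeneric{f_1}{\Z}(S) - \valgeneric{f_2}{\Z}(S)\bigr|
\le \E\bigl[|f_1(\X) - f_2(\X)| \bigm| do(\X_S=\x_S),\ \X\in\Z\bigr].
\end{align*}
The key observation is then that the density $p_{\Z,\x_S}$ is supported inside $\Z$ by construction, and since $\Z\subseteq\Z'$, the integrand is evaluated only at points $\x\in\Z'$. By hypothesis, $|f_1(\x)-f_2(\x)|\le\delta$ holds for all such $\x$, so the conditional expectation on the right-hand side is bounded above by $\delta$. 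This yields $|\valgeneric{f_1}{\Z}(S) - \valgeneric{f_2}{\Z}(S)| \le \delta$ for every $S\subseteq[d]$, giving $T=1$.

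There is essentially no obstacle here: the argument only requires that the measure underlying the conditional expectation be concentrated on $\Z$, which follows directly from the indicator $\ind(\y\in\Z)$ in \eqref{man-density} together with the assumption $\p(\X\in\Z\mid do(\X_S=\x_S))>0$ that makes $p_{\Z,\x_S}$ well-defined. The only subtle point worth being explicit about is that since the result is stated for the value function rather than the Shapley coefficients $\phi_i$, no additional averaging over subsets is needed; the constant $T=1$ is optimal and independent of $\x$, $S$, and the underlying density $p$.
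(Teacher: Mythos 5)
Your proof is correct and follows essentially the same route as the paper's: linearity of the expectation with respect to $p_{\Z,\x_S}$, the triangle (Jensen) inequality, and the observation that this measure is supported on $\Z\subseteq\Z'$, so the pointwise bound $\delta$ applies throughout the integration. The paper phrases the last step as a chain of suprema, $\sup_{\x\in\Z}|f_1(\x)-f_2(\x)|\le\sup_{\x\in\Z'}|f_1(\x)-f_2(\x)|\le\delta$, which is the same support argument you make explicit via the indicator in the density.
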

%
%
%
In contrast, we show that all other value functions under consideration do not satisfy this notion of robustness:
\begin{proposition}\label{subspace-robustness-causalshap}
    For any set $\Z'$ with $\p(\X\in \Z') < 1$, the IS value function $v^{\textup{IS}}_{\x, f}(S)$, the CES value function $v_{\textbf{x}, f}^{\textup{CES}}(S)$, and the MS value function $v_{\textbf{x}, f}^{\textup{MS}}(S)$, the JBShap value function $v_{\textbf{x}, f}^{\textup{J}}(S)$ and the RJBShap value function $v_{\textbf{x}, f}^{\textup{RJ}}(S)$ are all \emph{not} strong T-robust on subspace $\Z'$ for $|T| < \infty$.
\end{proposition}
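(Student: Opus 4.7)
My plan is to disprove subspace T-robustness for each of the five value functions by exhibiting a single perturbation $f_2-f_1$ that vanishes identically on $\Z'$ but can be made arbitrarily large outside $\Z'$. The hypothesis $\p(\X\in\Z')<1$ gives $c\coloneqq\p(\X\in\Z'^c)>0$, which also implies that $\Z'^c$ contains points where the density $p$ is strictly positive (otherwise $\int_{\Z'^c} p(\x)\,d\x$ would vanish). These two facts are the only inputs I need from the hypothesis.

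The universal construction is $f_1\equiv 0$ and $f_2(\x)=M\,\ind(\x\in\Z'^c)$ for an arbitrary $M>0$; then $\sup_{\x\in\Z'}|f_1(\x)-f_2(\x)|=0$, so subspace T-robustness with any finite $T$ would force $v_{\x,f_1}(S)=v_{\x,f_2}(S)$ for every $\x$ and $S$. I would evaluate each value function at $S=\emptyset$, which conveniently removes the dependence on $\x$ for all five candidates. For $v^{\textup{MS}}$, $v^{\textup{IS}}$, and $v^{\textup{CES}}$, each reduces to $\E[f(\X)]$ at $S=\emptyset$ (for IS the empty $do$-intervention is vacuous, and for CES the empty conditioning is trivial), so the difference equals $Mc>0$. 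For $v^{\textup{J}}_{\x,f,p}(\emptyset)=f(\x')p(\x')$, choosing the baseline $\x'\in\Z'^c$ with $p(\x')>0$ gives a difference of $Mp(\x')>0$. For $v^{\textup{RJ}}_{\x,f,p}(\emptyset)=\E_{p_b}[f(\X)p(\X)]$ with the natural choice $p_b=p$ suggested in the paper, the difference is $M\int_{\Z'^c}p(\x)^2\,d\x>0$.

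In every case the value function changes by a strictly positive amount while the sup-perturbation on $\Z'$ is zero, ruling out any finite $T$. If one prefers the cleaner setting $\delta>0$, shifting $f_2$ by a small constant $\eta>0$ on $\Z'$ yields $\delta=\eta$, and the same construction with $M$ chosen large enough defeats any proposed $T$. I do not anticipate a significant obstacle; the only delicate point is ensuring that the auxiliary objects in JBShap and RJBShap (the baseline $\x'$ and the prior $p_b$) interact non-trivially with $\Z'^c$, which is guaranteed by $\p(\X\in\Z')<1$.
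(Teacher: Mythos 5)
Your proposal is correct, and its core is the same argument as the paper's: a perturbation supported outside $\Z'$ of arbitrarily large magnitude, evaluated at $S=\emptyset$, so the robustness premise holds with $\delta=0$ while the value function changes by a strictly positive amount that grows without bound. For IS, CES and MS your construction is literally the paper's (it takes $f_2=f_1+K\ind(\x\notin\Z')$; your $f_1\equiv 0$, $f_2=M\ind(\x\notin\Z')$ is the same perturbation). The only divergence is in JBShap/RJBShap: the paper perturbs by $K\ind(\x\notin\Z',\,p(\x)>0)/p(\x)$ and evaluates $f(\x)p(\x)$ at a test point $\x\notin\Z'$ with $p(\x)>0$ — in effect the full coalition $S=[d]$, despite the proof labelling it $S=\emptyset$ — which makes the argument independent of the auxiliary baseline; you instead keep the plain indicator perturbation and push the off-manifold dependence into the auxiliary objects, choosing the baseline $\x'\notin\Z'$ with $p(\x')>0$ for JBShap and the prior $p_b=p$ for RJBShap, getting differences $Mp(\x')$ and $M\int_{\mathbb{R}^d\setminus\Z'}p(\x)^2\,\mathrm{d}\x$, both strictly positive. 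Your route is legitimate, since the paper treats $\x'$ as an arbitrary auxiliary baseline, but it proves slightly less for JBShap: if the baseline is fixed inside $\Z'$, your $S=\emptyset$ comparison gives zero difference, and you would need to fall back on $S=[d]$ with a test point outside $\Z'$ — which is exactly the paper's computation; adding that one line makes your JBShap case baseline-independent. Everything else, including the positivity of $\int_{\mathbb{R}^d\setminus\Z'}p(\x)^2\,\mathrm{d}\x$ (immediate from $\p(\X\notin\Z')>0$) and your $\delta>0$ variant, is sound.
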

Consider the family of value functions which \emph{drop} features in $\bar{S}$ through randomisation, i.e., $v_{f, p_S}(S) = \E_{\X \sim p_S}[f(\X)]$. We note that IS, MS, CES and ManifoldShap all fall into this family. For example, when $p_S = p(\X \mid do(\X_S = \x_S))$ we obtain IS, and when $p_S = p(\X \mid \X_S = \x_S)$ we obtain CES. We show in Appendix \ref{subsec:tv-distance} that the choice of $p_S$ in ManifoldShap (i.e. $p_{\Z, \x_S}$ in Eq. \eqref{man-density})  minimises the Total Variation distance with interventional distribution $p(\X \mid do(\X_S = \x_S))$ subject to the condition that $v_{f, p_S}(S)$ is strong T-robust on $\Z$. This ensures that ManifoldShap values provide reasonable estimation of \emph{causal} contribution of features. 

\done{Perhaps combine the two sets of robustness results}

\done{combine the proofs for JBShap and RJBShap. Formalise the meaning of `for any $\Z$'}
\done{Our methodology has an implicit dependence on the density, through the $\ind$, whereas JBshap directly depends on it.}

\done{Choice of baseline as $\x'$ is largely random?}

\subsection{Comparison with existing methods}\label{sec:comparison}
\textbf{Causal Accuracy. }Recall that, CES attributes feature importance based on feature correlations. Consequently, two highly correlated features may be attributed similar feature importance even if the model only depends on one of them, i.e., the sensitivity property is violated. However, ManifoldShap on the other-hand, seeks to estimate the \emph{causal} contribution of features towards the prediction $f(\x)$, as it uses the \emph{interventional} measure restricted to the manifold $\Z$ to drop features. The experiments in Appendix \ref{sec:exps-app} confirm this, as the ManifoldShap results are significantly less sensitive to feature correlations than CES. 

Our example in Eq. \eqref{eq:rjbshap-example} shows how the explicit dependence of RJBShap on the density can lead to extremely inaccurate Shapley explanations. In Appendix \ref{subsec:rjbshap}, we show that because of its causal nature, ManifoldShap provides significantly more accurate and intuitive explanations. 
Additionally, unlike RJBShap, ManifoldShap only depends on the density estimation via the indicator $\ind(p(\x)\geq \epsilon)$. Therefore, as we show in Appendix \ref{subsec:sensitivity-density-error}, ManifoldShap is significantly more robust to density estimation errors than RJBShap. 

Aside from this, \cite{ghalebikesabi2021on} propose Neighbourhood SHAP, a value function aimed to provide explanations for the localised behaviour of the model near the datapoint $\x$ where explanations are sought. 
While the authors empirically show the robustness of the methodology against off-manifold perturbations, they do not consider the causal perspective and therefore the main object of interest is not the causal contribution of features.

\textbf{Robustness. }As outlined in Section \ref{subsec-robustness}, ManifoldShap is robust to model changes outside the manifold and therefore is not vulnerable to adversarial attacks as in \citet{foolingshap}. 
In light of this, we argue that ManifoldShap provides a compromise between conditional and interventional Shapley values. 
It attempts to estimate causal contributions of features, while providing robustness guarantees.

\textbf{Trade-off between Accuracy and Robustness. }Restricting function evaluations to the manifold $\Z$, as in ManifoldShap, means that the resulting Shapley values are dependant on the manifold itself, and may not purely reflect the causal contribution of features. This is because these are no longer pure Interventional Shapley values. 
This results in a trade-off between robustness to off-manifold manipulation and the `causal accuracy' of the Shapley values.
ManifoldShap provides us flexibility over this trade-off, through the size of the manifold $\Z$. 
When $\Z = \man$, the size of the manifold is modulated through the $\epsilon$ parameter.
As $\epsilon \rightarrow 0$, the size of manifold increases and ManifoldShap values tend towards IS values. However, as mentioned above, it comes at the cost of reduced robustness, as the Shapley evaluations include increasing number of datapoints `far' from the training data.
On the other hand, increasing $\epsilon$ increases the robustness of Shapley values, while reducing their causal accuracy, as the resulting Shapley values discard a significant number of datapoints which lie outside $\man$.

\textbf{Computational Considerations. }Computing CES may be computationally expensive and may require different supervised or unsupervised learning techniques \citep{expondatamanifold, kernelshap, on-manifold-off-manifold}. In contrast, while ManifoldShap requires estimating a manifold classifier, estimating $\valgeneric{f}{\Z}(S)$ does not incur any computational cost over and above computing the interventional expectations. Proposition \ref{manifoldShap} illustrates this by expressing the ManifoldShap value function as a ratio of interventional expectations. This is even more straightforward when the causal structure is as in Figure \ref{fig:dag}\done{describe the setting}, and the interventional expectation is equivalent to marginal expectation.
Additionally, to avoid the exponential time complexity of computing the value function for all $S\subseteq [d]$, we propose a sampling based estimation in Appendix \ref{subsec:rejection_sampling} which makes computation of ManifoldShap feasible for high dimensional feature spaces (see Appendix \ref{subsec:feature_dims}).
\section{ROBUSTNESS IN OTHER EXPLANATION METHODS}
Shapley value is not the only \textit{off-manifold} explanation method. This problem has also been explored in other explanation methods like LIME \citep{foolingshap, improvinglime, resistingood} and gradient-based methods \citep{foolingnn, fairwashing}. For example, \citet{foolingnn} illustrates this problem in gradient-based interpretability methods for Neural Networks. The paper shows that these explanations are not stable when model is manipulated without hurting the accuracy of the model. Numerous solutions have also been proposed such as \citet{resistingood}, which addresses this problem for explanation methods like RISE, OCCLUSION and LIME by quantifying a similarity metric for perturbed data. This metric is then integrated into the explanation methods. Likewise \citet{improvinglime} proposes to make LIME robust to off-manifold manipulation, by using a GAN to sample more realistic synthetic data which are then used to generate LIME explanations. Aside from this, \citet{fairwashing} proposes an alternative robust gradient-based explanation method. However, unlike Shapley values, gradient-based methods rely on model properties (e.g., differentiability), and are not model agnostic. 

\section{EXPERIMENTAL RESULTS}\label{sec:exps}
	In this section, we conduct experiments on synthetic and real world datasets to demonstrate the utility of ManifoldShap and compare it with existing methods. Instead of training the models, we compute Shapley values for the underlying true functions directly. Additional experiments investigating the sensitivity of the different Shapley methods to changing feature correlations, manifold size and feature dimensions have been included in Appendix \ref{sec:exps-app}. The code to reproduce our experiments can be found at \href{https://github.com/amazon-science/manifold-restricted-shapley}{\color{blue}{github.com/amazon-science/manifold-restricted-shapley}}.
	
	\subsection{Synthetic data experiments}
	
	\begin{figure*}
	\centering
	\begin{subfigure}{0.5\textwidth}
	    \centering
	    \includegraphics[height=0.9in]{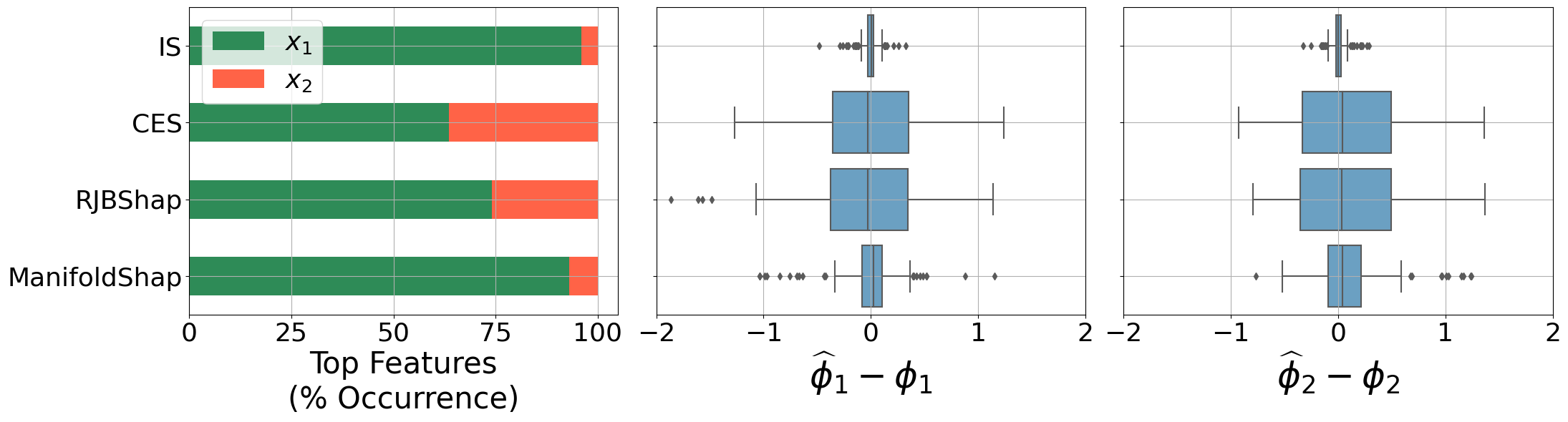}
	    \subcaption{$\delta=0$}
	    \label{fig:delta_0}
	\end{subfigure}%
	\begin{subfigure}{0.5\textwidth} 
	    \centering
	    \includegraphics[height=0.9in]{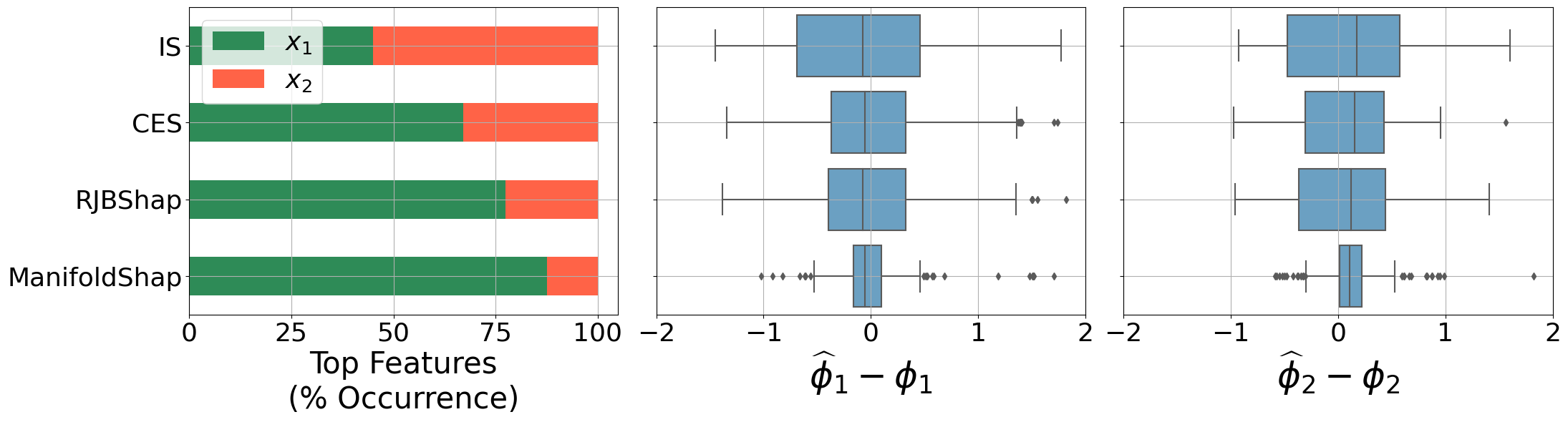}
	    \subcaption{$\delta=5$}
	    \label{fig:delta_5}
	\end{subfigure}
	\caption{Synthetic data experiments for $\delta=0, 5$. The barplots on the left of each subfigure shows the most important features for different Shapley value functions. The boxplots show the approximation errors of the Shapley values for different value functions.}\label{fig:pert-exps-dag}
    \end{figure*}

	\done{causal structure of these experiments}
	\done{talk about baselines}
	\done{how do you compute the conditional expectation}

	Here we investigate the effect of model perturbation in low density regions on Shapley values. 

\paragraph{Data generating mechanism.}
\begin{wrapfigure}{l}{0pt}
	\begin{tikzpicture}
	\tikzset{
    > = stealth,
    every node/.append style = {
        draw = black,
        shape = circle,
        inner sep = 0.5pt,
        minimum size=0.75cm
    },
    every path/.append style = {
        arrows = ->,
    }
    }
    \tikz{
        \node (x) at (0,0) {$X_2$};
        \node (y) at (2,0) {$Y$};
        \node (z) at (1,1) {$X_1$};
        \path (x) edge (y);
        \path (z) edge (x);
        \path (z) edge (y);
    }
    \end{tikzpicture}
\end{wrapfigure}
	In this experiment, $\mathcal{Y} \subseteq \mathbb{R}$ and $\mathcal{X} \subseteq \mathbb{R}^2$ follow the Causal DAG shown on the left.\done{re-add caption} %
	In specific, the Structural Causal Model (SCM) \citep{pearl} for the ground truth data generating mechanism is:
	\begin{align*}
	    X_1 &= \epsilon_1, \hspace{0.3cm} X_2 = \rho X_1 + \sqrt{1 - \rho^2} \epsilon_2, \hspace{0.3cm} Y = X_1.
	\end{align*}
	Here, $\epsilon_i \overset{\textup{i.i.d.}}{\sim}\mathcal{N}(0, 1)$ and $\rho = 0.85$ is the correlation between $X_1, X_2$.
	Next, we define the perturbed models.
	\begin{figure}[h!]
		\centering
		\begin{subfigure}[t]{0.24\textwidth}
			\centering
			\includegraphics[height=1.3in]{./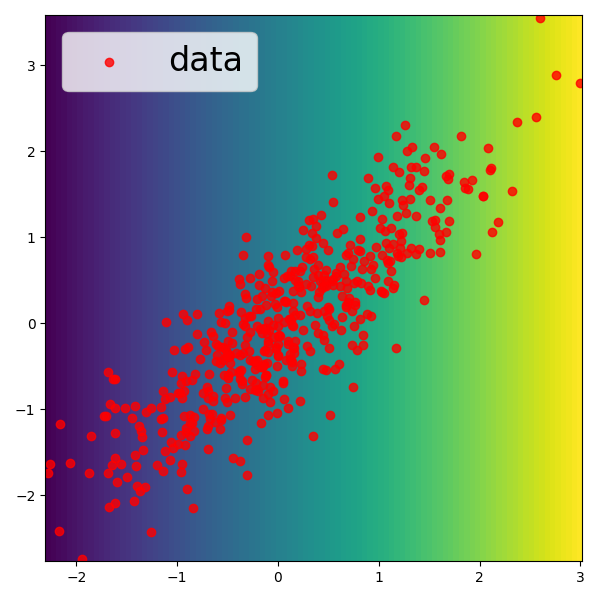}
			\subcaption{Heatmap of $g_\delta$ for $\delta = 0$.}	
		\end{subfigure}%
		\begin{subfigure}[t]{0.24\textwidth}
			\centering
			\includegraphics[height=1.3in]{./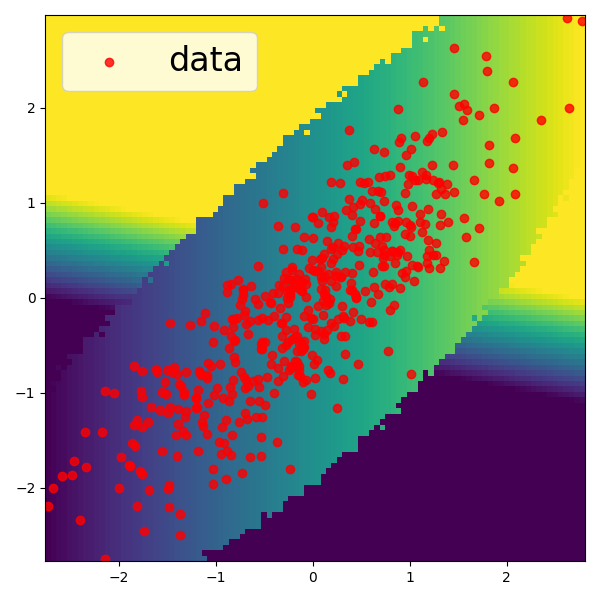}
			\subcaption{Heatmap of $g_\delta$ for $\delta = 5$.}
		\end{subfigure}
		\caption{Heatmaps for ground truth and perturbed models $g_\delta$. Each model has test mean squared error of 0.}\label{fig:heatmaps}
	\end{figure}
	\paragraph{Perturbed models.}
	We define the following family of perturbed models $g_\delta:\mathcal{X} \rightarrow \mathbb{R}$, parameterised by $\delta \in \mathbb{R}$.
	\begin{align*}
    	g_\delta(\X) \coloneqq Y + \delta X_2 \ind(\X \not \in \alphaman).
	\end{align*}
	 Here, we use VAEs to estimate $\alphaman$ (see Appendix \ref{subsec:computingman}) and choose $\alpha=1-10^{-3}$.
	By construction, the models $g_\delta$ should agree with the ground truth on the $\alpha$-manifold, i.e. $g_\delta(\X) = Y$ when $\X \in \alphaman$, but these models differ from the ground truth for $\X \not\in \alphaman$. Figure \ref{fig:heatmaps} shows the model heatmaps for $\delta = 0, 5$ along with the original data.
    It is impossible to distinguish between these models on the data manifold, as both have test mean squared error of 0.

	\paragraph{Results.}
	Recall that the ground truth model does not depend on $X_2$, so the ground truth Shapley value for feature 2 is $\phi_2 = 0$. As a result, for any prediction, feature 1 has greater absolute Shapley value than feature 2, i.e. $|\phi_1| \geq |\phi_2|$.
	We compute Shapley values for $g_\delta$ using different value functions on 500 datapoints $\{\x^{(i)}\}_{i=1}^{500}$, sampled from the SCM defined above. We compute CES using the ground truth conditional distributions of $X_i \mid X_j$ for $i\neq j$, which can be obtained analytically in this setting. Figure \ref{fig:pert-exps-dag} shows the results, with the bar plots on the left of Figures \ref{fig:delta_0} and \ref{fig:delta_5}, showing the most important features as per different value functions for $\delta=0, 5$.\done{re-add footnote}%
	
	For $\delta=0$, Figure \ref{fig:delta_0} confirms that the IS values of the ground truth model attribute greatest feature importance to feature 1 for all datapoints. This is expected as the ground truth model does not depend on $x_2$. For ManifoldShap, we observe that for 4\% of the datapoints, feature 2 is attributed greater importance. This highlights that robustness of ManifoldShap comes at the cost of reduced causal accuracy of Shapley values. Furthermore, it can be seen that CES value function attributes greatest importance to feature 2 for more than 30\% of the datapoints. This is because CES provides similar Shapley values for positively correlated features. We observe similar behaviour for RJBShap, which attributes greatest importance to feature 2 for about 20\% of datapoints.
	This happens because RJBShap provides feature contributions for $\tilde{f}_p(\x)=f(\x)p(\x)$ rather than $f(\x)$, and can therefore be misleading. \done{elaborate}
	
	When $\delta=5$, Figure \ref{fig:delta_5} shows that, for more than 50\% of datapoints IS attributes greater importance to feature 2 than feature 1 in the perturbed model. This shows that IS is sensitive to off-manifold perturbation. For ManifoldShap on the other hand, feature 2 is attributed greater importance for only about $10\%$ of the datapoints, less than all other baselines.
	
	We have also plotted the difference between estimated Shapley values and the ground truth IS values, for each value function. For a fair comparison between different value functions, we scale the Shapley values so that $\sum_{i\in \{1,2\}} |\phi_i | = 1$. As $\delta$ increases from 0 to 5, we can see that the errors in Shapley values increase for IS, while the errors in ManifoldShap are more concentrated around 0 than any other baseline.  

    The results show that ManifoldShap values, unlike IS, remain robust to off-manifold manipulations, while providing explanations which remain closer to ground truth IS values overall. CES and RJBShap, on the other hand can result in misleading explanations.

	\begin{figure*}[ht]
	\centering
	\begin{subfigure}{0.5\textwidth}
	    \centering
	    \includegraphics[height=0.98in]{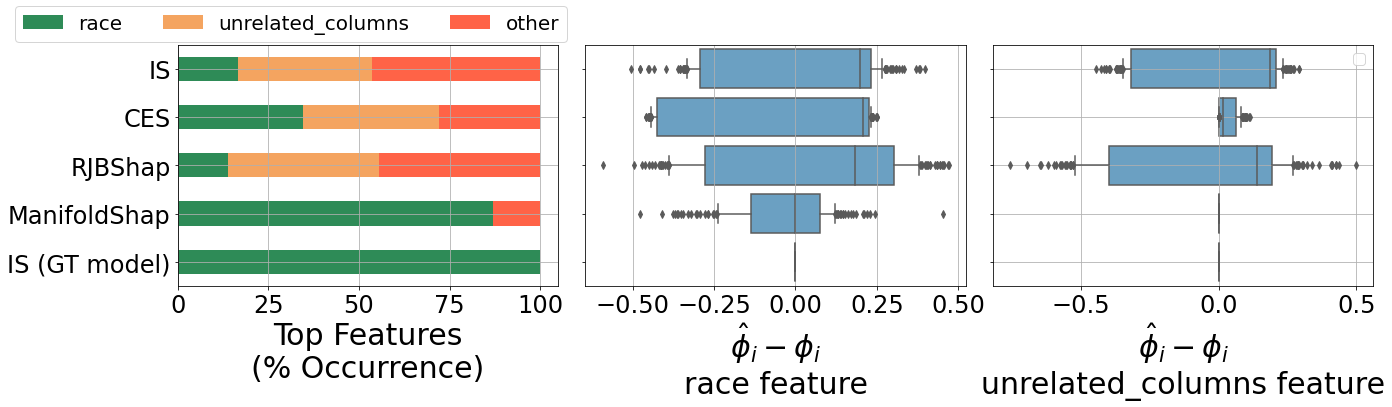}
	    \subcaption{COMPAS dataset results}
	    \label{fig:compas}
	\end{subfigure}%
	\begin{subfigure}{0.5\textwidth} 
	    \centering
	    \includegraphics[height=0.98in]{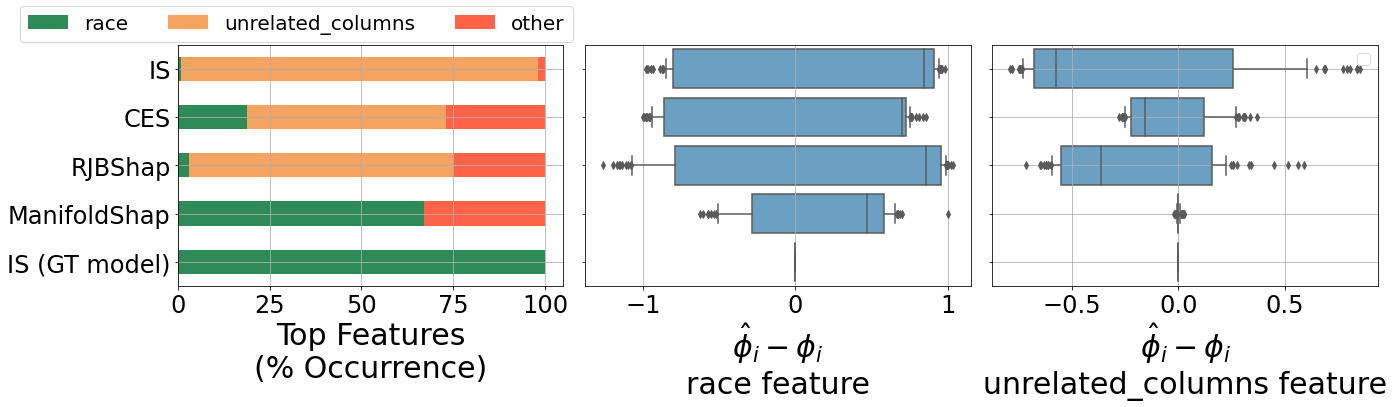}
	    \subcaption{Communities and Crime dataset results}
	    \label{fig:cc}
	\end{subfigure}
	\caption{Experiments on COMPAS and CC datasets. The barplots on the left of each subfigure shows the most important features for different Shapley values functions. The boxplots show the approximation errors of the Shapley values for different value functions\done{Rename CausalShap}.}\label{fig:real-world}
    \end{figure*}
    
    \done{IMP: Criticise RJBShap}
    
	\subsection{Real world datasets}

In this subsection, we evaluate the effect of adversarial off-manifold manipulation of models on Shapley values using real-world datasets. 
Specifically, using the same setup as in \citet{foolingshap}, we show that existing methodologies may fail to identify highly problematic model biases, whereas ManifoldShap can mitigate this problem due to its robustness properties.
 We consider the causal structure in Figure \ref{fig:dag} where the true features $\tilde{X}_i$ are distinguished from input features $X_i$, and therefore IS is equivalent to MS here. 
	
	\paragraph{Datasets.}
	The COMPAS dataset, collected by ProPublica \citep{machinebias}, includes information for 6172 defendants from Broward County, Florida. This information comprises 52 features including defendants' criminal history and demographic attributes. The sensitive attribute in this dataset is defendants' race. \done{Each defendant is classified based on whether they have a high-risk for recidivism.} The second dataset, Communities and Crime (CC), is a UCI dataset \citep{Dua:2019} which includes crime data in communities across the US, where each community constitutes a datapoint comprising 128 features. The sensitive attribute in CC is the percentage of Caucasian population. From here onwards, we use `race' to refer to the sensitive attribute for both datasets. \done{A label is assigned to each community which classifies if the proportion of violent crime in that community is above median or not.}
    \done{Next, we outline the experimental setup.}
	
	\paragraph{Biased classifier.} Following the strategy of \citet{foolingshap}, we construct the binary classifier $f$ to be only dependant on the sensitive feature for both datasets. Additional details are given in Appendix \ref{subsec:experimental_detals_app}. 

	\paragraph{Manifold estimation.} Just like in \citet{foolingshap}, we determine the manifold $\mathcal{Z}$ by training an OOD classifier. In particular, we follow the strategy in \citet{foolingshap} by perturbing each datapoint on randomly chosen features, and subsequently using these newly generated perturbations to train an OOD classifier\done{to detect OOD samples}. 
	
	\paragraph{Out of manifold perturbation.} To perturb the model outside the manifold $\mathcal{Z}$, we construct 2 synthetic features (referred to as `unrelated columns') like \citet{foolingshap}.
	For datapoints that lie outside $\mathcal{Z}$, only the `unrelated columns' are used to classify the datapoints. However, unlike \citet{foolingshap}, these `unrelated columns' are positively correlated with race. This is done to highlight a shortcoming of CES: even though CES is an on-manifold value function, the positive correlation between unrelated columns and race `fools' CES into attributing non-zero credit to the synthetic features. 
	\paragraph{Results.}

    We compute the Shapley values for the perturbed models on 500 datapoints from a randomly chosen held-out dataset. We use the supervised approach to estimate CES as outlined in Appendix \ref{subsec:CES-comp}.
    The barplots in Figures \ref{fig:compas} and \ref{fig:cc} show the percentage of data points in COMPAS and CC datasets respectively, for which each feature shows up as the top feature as per different value functions. For RJBShap, CES, and IS, there are more data points in both datasets with top feature among `unrelated columns' than data points with top feature of race. 
    For IS, this happens as a result of OOD perturbation of the model, and shows that when using IS, we can hide biases in the model by perturbing the model out of manifold. For RJBShap, this could be explained by the fact that it explicitly depends on the joint density $p(\x)$ of the data. Since, `unrelated columns' are positively correlated with race, the dependence of the density $p(\x)$ on these features and race is similar. As a result, `unrelated columns' get non-zero attributions in RJBShap.
    
    \done{For CES, this is also explained by the positive correlation between race and `unrelated columns'. Even though CES only evaluates the function on manifold, the positive correlation means that CES attributes similar importance for features `unrelated columns' as for race.} 
    \red{This positive correlation between race and `unrelated columns' also causes CES to attribute similar importance for features `unrelated columns' as for race.}
    This can be especially misleading when the data contains multiple correlated features which are not used by the model.
    
    On the other hand, for ManifoldShap, majority of the datapoints have top feature race, whereas none of them have top feature among `unrelated columns'. Figure \ref{fig:real-world} also shows the difference between estimated Shapley values and the ground truth IS values of the biased model. We have again rescaled the Shapley values so that $\sum_{i \in [d]}|\phi_i| = 1$ for fair comparison between different value functions. We can see that for the feature race, the errors of ManifoldShap are more concentrated around 0 than any other baseline considered. For `unrelated columns', ManifoldShap values are $\hat{\phi}_i=\phi_i=0$, i.e., ManifoldShap satisfies sensitivity property in this case. This shows that ManifoldShap is significantly more \done{more }robust to adversarial manipulation of the function outside the manifold, as well as robust to the attribution of credit based on correlations among features. \done{In this way, ManifoldShap retains the `causal interpretation' of Shapley values, while restricting the function evaluation to the data manifold.}

 \section{DISCUSSION AND LIMITATIONS}
	In this paper, we propose ManifoldShap, a Shapley value function which provides a compromise between existing on and off manifold value functions, by providing explanations which are robust to off-manifold perturbations of the model while estimating the causal contribution of features. However, ManifoldShap also has its limitations. 
	
	While our work does not make any assumptions on the set $\Z$, the properties of ManifoldShap are inherently linked to the choice of $\Z$. ManifoldShap is only robust to perturbation of model outside $\Z$ and perturbations inside $\Z$ could lead to significant changes in the computed Shapley values. It is therefore important to choose $\Z$ that is a good representative of the true data manifold, as otherwise, the Shapley values may not be robust to off-manifold perturbations. Additionally, as pointed out in Section \ref{sec:comparison}, restricting model evaluations to the set $\Z$ can reduce the causal accuracy of Shapley values. This becomes especially evident when the data manifold $\Z$ is \textit{sparse} or low-dimensional relative to the space $\mathcal{X}$. We highlight this empirically in Appendix \ref{subsec:corr}. Likewise, as we show in Appendix \ref{sec:properties}, the sensitivity and symmetry properties of ManifoldShap are also dependent on the properties of $\Z$. It is therefore worth exploring methodologies of choosing $\Z$ which provide the ideal trade-off between desirable properties like causal accuracy and robustness of explanations.
	We believe these limitations suggest interesting research questions that we leave for future work.

	\done{sensitivity axioms}
	\done{subspace robustness}
	\done{computing density can be difficult.}
	
	\done{in the case of sparse manifolds, we end up with conditional expectations.}
	
	\done{could think about the a softer version of ManifoldShap}

\subsubsection*{Acknowledgements}
We would like to thank Dominik Janzing for his valuable suggestions and insightful discussions. We are also grateful to Kailash Budhathoki and Philipp Faller for providing feedback on an earlier version of the manuscript.

\bibliography{ref}
\bibliographystyle{plainnat} %

\onecolumn
\appendix
\appendixpage

\startcontents[sections]
\printcontents[sections]{l}{1}{\setcounter{tocdepth}{2}}

\newpage

 \section{PROPERTIES OF MANIFOLDSHAP}\label{sec:properties}
In this section, we consider the theoretical properties of ManifoldShap. 
 The proofs for results in this section are provided in Section \ref{proofs}.
	\subsubsection{Sensitivity Property}
 The following result holds in the setting of \citet{lshap}, i.e., when the real feature values are formally distinguished from the feature values input into the function (see Figure \ref{fig:dag}). In this case, the interventional distribution of $p(\X_{\bar{S}} \mid do(\X_S = \x_S))$ is the same as the marginal distribution $p(\X_{\bar{S}})$.
	\begin{proposition}[Sensitivity]\label{sensitivity}
	Let $i \in [d]$ be such that 
		\begin{enumerate}
			\item the function $f(\textbf{x})$ does not depend on $x_i$ for all values of $\textbf{x}$, 
			\item the set $\Z$ is of the form, $\Z = \Z_1 \times \dots \times \Z_d$, where $\Z_j \subseteq \mathbb{R}$ and $X_i \in \Z_i$ almost surely.
		\end{enumerate}
		Then, if the causal graph of features is as shown in Figure \ref{fig:dag}, we have that $\valgeneric{f}{\Z}(S) = \valgeneric{f}{\Z}(S \cup \{i\})$, and therefore $\atr{i}{f} = 0$. 
	\end{proposition}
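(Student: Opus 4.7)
The plan is to reduce the claim to a direct computation using Lemma \ref{manifoldShap} and the factorisation structure of $\Z$. By Lemma \ref{manifoldShap}, for any subset $T\subseteq [d]$,
\[
\valgeneric{f}{\Z}(T) = \frac{\E[f(\X)\,\ind(\X\in\Z)\mid do(\X_T = \x_T)]}{\p(\X\in\Z\mid do(\X_T = \x_T))}.
\]
Under the causal DAG of Figure \ref{fig:dag}, the interventional distribution reduces to the marginal: $p(\X_{\bar{T}}\mid do(\X_T = \x_T)) = p(\X_{\bar{T}})$, so both numerator and denominator can be written as marginal expectations over $\X_{\bar{T}}$ with $\X_T$ fixed to $\x_T$.

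Next I would exploit the product structure $\Z = \Z_1\times\cdots\times\Z_d$ to write $\ind(\X\in\Z) = \prod_{j=1}^d \ind(X_j\in\Z_j)$. Assumption 2 says $X_i\in\Z_i$ almost surely, so $\ind(X_i\in\Z_i)=1$ almost surely, and in particular $\ind(x_i\in\Z_i)=1$ for any $\x$ in the support. This lets me peel off the $i$-th factor in both the numerator and denominator of $\valgeneric{f}{\Z}(S)$ and of $\valgeneric{f}{\Z}(S\cup\{i\})$. Combined with assumption 1, namely that $f(\x)$ does not depend on $x_i$, the integrand $f(\x_S,\X_{\bar{S}})\prod_{j\in\bar{S}\setminus\{i\}}\ind(X_j\in\Z_j)$ appearing after dropping the $i$-th factor is identical in the two value functions; the only remaining integral over $X_i$ (present in $\valgeneric{f}{\Z}(S)$ but not in $\valgeneric{f}{\Z}(S\cup\{i\})$) is just $\int p(X_i)\,dX_i = 1$ in both numerator and denominator, hence cancels. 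This yields $\valgeneric{f}{\Z}(S) = \valgeneric{f}{\Z}(S\cup\{i\})$ for every $S\subseteq[d]\setminus\{i\}$. Plugging this into the Shapley formula gives $\phi_i=0$ term by term.

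I do not expect a serious obstacle here; the only subtlety is making the ``peeling off'' of the $\ind(X_i\in\Z_i)$ factor rigorous. Concretely, I would justify it by Fubini/Tonelli: because the interventional distribution factorises under Figure \ref{fig:dag} and $\Z$ is a product set, the integrals over $X_i$ and over $\X_{\bar{S}\setminus\{i\}}$ can be separated, and the $X_i$-integral contributes the same almost-sure value $1$ to numerator and denominator. I would state this factorisation as a short lemma (or a one-line remark) and then carry out the cancellation. Finally, I would remark that assumption 2 is essential: without the product structure, $\ind(X_i\in\Z_i)$ would be coupled with the other coordinates and the cancellation would fail, which is consistent with the discussion in Section \ref{sec:comparison} that sensitivity is tied to the geometry of $\Z$.
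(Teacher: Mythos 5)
Your proposal is correct and follows essentially the same route as the paper's proof: reduce the interventional expectation to a marginal one under the DAG of Figure \ref{fig:dag}, factor $\ind(\X \in \Z)$ as $\prod_j \ind(X_j \in \Z_j)$, drop the almost-surely-one factor $\ind(X_i \in \Z_i)$, and use the independence of $f$ from $x_i$ to identify the two value functions. The only cosmetic difference is that the paper justifies $\ind(x_i \in \Z_i) = 1$ via the standing requirement $\x \in \Z$ from the ManifoldShap definition rather than via your "support" remark, but the cancellation argument is identical.
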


	\textbf{Remark.} As mentioned previously, in this paper we argue that a function should mainly be characterised by it's behaviour on manifold. Note that in this case, unlike the classical formulation of Sensitivity axiom \citep{kernelshap}, we also need the condition 2 above, which implies that $\ind(\X \in \Z)$ is independent of $X_i$.
	This can be justified as follows: Define a function $h(\textbf{x}) \coloneqq \ind(\x \in \Z)f(\textbf{x})$. If condition 2 does not hold, i.e., $\ind(\X \in \Z)$ depends on $X_i$, then $h(\textbf{X})$ must depend on $X_i$. Moreover, by definition, $h(\textbf{x}) = f(\textbf{x})$ for all $\x \in \Z$, i.e. $h$ and $f$ agree on $\Z$. Therefore, since $f$ and $h$ are indistinguishable on the $\Z$, and $h(\X)$ depends on $X_i$, it would be misleading to have zero attribution for the $i$'th feature.
	
	\blue{Comment about how in practice, sensitivity assumption tends to hold under relatively weak conditions.}\blue{Emphasize that condition 2 above is much weaker than independence of features.}
	
	\subsubsection{Symmetry Property}
 Like the previous result, the following result holds in the setting of \citet{lshap}, where Interventional Shapley is equivalent to Marginal Shapley.
	\begin{proposition}[Symmetry]\label{symmetry}
		Let $i, j \in [d]$ be such that 
		\begin{enumerate}
			\item the function $f(\textbf{x})$ is symmetric in components $i$ and $j$ on $\Z$,
			\item the density $p(\textbf{x})$ is symmetric in components $i$ and $j$,
			\item the function $\ind(\x \in \Z)$ is symmetric in components $i$ and $j$.
		\end{enumerate}
		Then, if the causal graph of features is as shown in Figure \ref{fig:dag}, we have that for any $S \subseteq [d] \setminus \{i, j\}$ and $\textbf{x}$ such that $x_i = x_j$, we have that $\valgeneric{f}{\Z}(S \cup \{i\}) = \valgeneric{f}{\Z}(S \cup \{j\})$, and therefore $\atr{i}{f} = \atr{j}{f}$.
	\end{proposition}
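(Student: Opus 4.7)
The plan is to first use Lemma \ref{manifoldShap} to rewrite $\valgeneric{f}{\Z}(T)$ as a ratio of interventional expectations, then exploit the causal graph of Figure \ref{fig:dag} to replace these with marginal expectations, and finally apply a change of variables that swaps coordinates $i$ and $j$ to match the integrals corresponding to $T_1=S\cup\{i\}$ and $T_2=S\cup\{j\}$.

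Concretely, fix $S\subseteq[d]\setminus\{i,j\}$ and let $U=[d]\setminus(S\cup\{i,j\})$. Under the DAG in Figure \ref{fig:dag}, intervening on $\X_T$ leaves the distribution of $\X_{\bar T}$ equal to its marginal, so Lemma \ref{manifoldShap} gives
\begin{align*}
\valgeneric{f}{\Z}(T_1) &= \frac{\int f(\mathbf{v})\,\ind(\mathbf{v}\in\Z)\,p_{j,U}(w,\mathbf{z}_U)\,dw\,d\mathbf{z}_U}{\int \ind(\mathbf{v}\in\Z)\,p_{j,U}(w,\mathbf{z}_U)\,dw\,d\mathbf{z}_U},
\end{align*}
where $\mathbf{v}$ is the vector with $\mathbf{v}_S=\x_S$, $\mathbf{v}_i=x_i$, $\mathbf{v}_j=w$, $\mathbf{v}_U=\mathbf{z}_U$, and similarly an analogous expression for $\valgeneric{f}{\Z}(T_2)$ with $\mathbf{v}'_j=x_j$, $\mathbf{v}'_i=w$, and marginal density $p_{i,U}$.

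Next I would apply the coordinate transposition $\sigma$ swapping $i\leftrightarrow j$ to the integrand of the $T_2$ expression. The three symmetry hypotheses yield $f(\sigma(\mathbf{v}'))=f(\mathbf{v}')$ on $\Z$, $\ind(\sigma(\mathbf{v}')\in\Z)=\ind(\mathbf{v}'\in\Z)$, and $p_{i,U}=p_{j,U}$ as functions (the latter following from the symmetry of the joint density $p$). Because the proposition assumes $x_i=x_j$, the image vector $\sigma(\mathbf{v}')$ has $i$-th coordinate equal to $x_j=x_i$ and $j$-th coordinate equal to $w$, so after renaming the dummy $w$ it coincides exactly with $\mathbf{v}$ from the $T_1$ expression. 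The numerator integrals therefore match, and the same argument (without the $f$ factor) handles the denominators, giving $\valgeneric{f}{\Z}(S\cup\{i\})=\valgeneric{f}{\Z}(S\cup\{j\})$.

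Finally, to conclude $\atr{i}{f}=\atr{j}{f}$, I would invoke the standard Shapley symmetry argument: split each sum $\sum_{S'\subseteq[d]\setminus\{i\}}$ into terms where $j\notin S'$ and where $j\in S'$ (parametrising the latter by $S'=S\cup\{j\}$, $S\subseteq[d]\setminus\{i,j\}$), and do likewise for $\phi_j$. The weights $|S'|!(d-|S'|-1)!/d!$ match between paired terms, and the equalities $v(S\cup\{i\})=v(S\cup\{j\})$ (just proved) together with the trivial $v(S\cup\{i,j\})=v(S\cup\{j,i\})$ force all paired contributions to cancel. The main subtlety — rather than a deep obstacle — is keeping the coordinate bookkeeping straight so that the transposition change of variables genuinely lines up the integrals, which hinges critically on all three symmetry assumptions acting simultaneously and on the hypothesis $x_i=x_j$.
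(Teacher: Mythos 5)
Your proposal is correct and follows essentially the same route as the paper's proof: both express $\valgeneric{f}{\Z}(T)$ via Lemma \ref{manifoldShap} with interventional expectations reduced to marginal ones under Figure \ref{fig:dag}, and both conclude by a coordinate-swap change of variables that exploits the symmetry of $f(\x)\ind(\x\in\Z)$, $\ind(\x\in\Z)$ and $p(\x)$ together with $x_i=x_j$ (the paper packages this as an auxiliary identity $\E[m(\x_{S\cup\{i\}},\X_{\bar S\setminus\{i\}})]=\E[m(\x_{S\cup\{j\}},\X_{\bar S\setminus\{j\}})]$ for any $i,j$-symmetric $m$, applied to $m_1=\ind_\Z$ and $m_2=f\cdot\ind_\Z$). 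Your final paragraph spelling out the pairing of Shapley weights to get $\atr{i}{f}=\atr{j}{f}$ is a standard step the paper leaves implicit, and it is carried out correctly.
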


    \textbf{Remark.} The condition 3 above states that $\Z$ should be symmetric in components $i$ and $j$. This condition will be satisfied if, for example, $\Z$ is a ball centred at origin. Moreover, if condition 2 is satisfied, i.e., the density $p(\x)$ is symmetric in $x_i$ and $x_j$, it is straightforward to show that the $\epsilon$-density manifold will also satisfy condition 3, for any $\epsilon>0$. Additionally, we emphasise that condition 2 is not specific to our value function, and is also needed for symmetry property to hold for CES and IS. \citet{lshap} illustrates this with an example where symmetry fails to hold for both CES and IS without condition 2. \blue{make sure this is true}
	
	\subsubsection{Efficiency Property}
 All results presented in this section from here onwards do not assume a specific causal structure on the features and hold for any general causal graph on features.
	\begin{proposition}[Efficiency]
		The value function $\valgeneric{f}{\Z}(S) = \valfunset{S}{f}{\Z}$ satisfies 
		\[
		\valgeneric{f}{\Z}([d]) - \valgeneric{f}{\Z}(\emptyset) = f(\textbf{x}) - \E[f(\X) \mid \X \in \Z] ,
		\]
		for any $\textbf{x} \in \Z$.
		Therefore, $\sum_i \atr{i}{f} = f(\textbf{x}) - \E[f(\X) \mid \X \in \Z]$.
	\end{proposition}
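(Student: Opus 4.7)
The plan is to verify the two endpoint values of the ManifoldShap value function directly from the definition, and then invoke the standard telescoping identity for Shapley attributions to deduce the summation formula.

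First I would evaluate $\valgeneric{f}{\Z}([d])$. When $S = [d]$, the intervention $do(\X_{[d]} = \x)$ sets the whole feature vector deterministically to $\x$, so the interventional distribution $p(\cdot \mid do(\X_{[d]} = \x))$ is the Dirac mass $\delta_{\x}$. Since we are given $\x \in \Z$, the indicator $\ind(\X \in \Z)$ is almost surely $1$ under this point mass, so both the numerator and denominator in Lemma \ref{manifoldShap} collapse to their trivial values, yielding
\begin{align*}
\valgeneric{f}{\Z}([d]) \;=\; \frac{f(\x)\cdot 1}{1} \;=\; f(\x).
\end{align*}

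Next I would evaluate $\valgeneric{f}{\Z}(\emptyset)$. When $S = \emptyset$ there is no intervention, and the interventional measure reduces to the observational law of $\X$. The definition then gives
\begin{align*}
\valgeneric{f}{\Z}(\emptyset) \;=\; \E[f(\X) \mid \X \in \Z].
\end{align*}
Subtracting the two expressions yields the claimed identity $\valgeneric{f}{\Z}([d]) - \valgeneric{f}{\Z}(\emptyset) = f(\x) - \E[f(\X) \mid \X \in \Z]$.

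Finally, the statement $\sum_i \phi_i = f(\x) - \E[f(\X) \mid \X \in \Z]$ follows from the general telescoping property of Shapley values: for any value function $v$, the Shapley weights are constructed so that $\sum_{i=1}^d \phi_i = v([d]) - v(\emptyset)$ (this is the efficiency axiom at the level of the Shapley operator, independent of how $v$ is defined). Combining this identity with the endpoint computation above gives the result. There is no real obstacle here; the only care required is to verify that evaluating $\E[\,\cdot \mid do(\X_{[d]} = \x), \X \in \Z]$ is well-defined, which is ensured by the assumption $\x \in \Z$ together with the well-posedness condition $\p(\X \in \Z \mid do(\X_S = \x_S)) > 0$ in the Definition of ManifoldShap.
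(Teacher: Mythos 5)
Your proof is correct and is precisely the argument the paper leaves implicit: the paper's proof simply states that the result ``follows straightforwardly from the definition,'' and your endpoint computations ($v([d]) = f(\x)$ via the degenerate interventional distribution at $\x \in \Z$, and $v(\emptyset) = \E[f(\X) \mid \X \in \Z]$ via the empty intervention) together with the standard telescoping identity $\sum_i \phi_i = v([d]) - v(\emptyset)$ are exactly the details being elided. Nothing in your approach diverges from the paper's intent, and your added care about well-definedness under $do(\X_{[d]} = \x)$ is a sound refinement rather than a different route.
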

	\begin{proof}
		Follows straightforwardly from the definition of $\valgeneric{f}{\Z}(S)$.
	\end{proof}
	
	\subsubsection{Linearity Property}
	\begin{proposition}[Linearity]
		For any functions $f_1, f_2$ and $\alpha_1, \alpha_2 \in \mathbb{R}$, 
		\[
		\valgeneric{\alpha_1 f_1 + \alpha_2 f_2}{\Z}(S) = \alpha_1 \valgeneric{f_1}{\Z}(S) + \alpha_2 \valgeneric{f_2}{\Z}(S)
		\]
		Therefore, $\atr{i}{\alpha_1 f_1 + \alpha_2 f_2}^{\alpha_1 f_1 + \alpha_2 f_2} = \alpha_1\atr{i}{f_1}^{f_1}  + \alpha_2\atr{i}{f_2}^{f_2}$, where $\atr{i}{f}^f$ denotes the Shapley value for feature $i$ and function $f$.
	\end{proposition}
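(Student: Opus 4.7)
The plan is to establish the value-function linearity first, and then lift it to Shapley values using the definition. Both steps follow directly from elementary linearity properties, so this is essentially a two-line argument rather than something requiring a deep structural result.

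First I would appeal to the representation of the value function as an expectation under the conditional density $p_{\Z, \x_S}$ defined in equation \eqref{man-density}. Specifically, by definition
\begin{align*}
\valgeneric{\alpha_1 f_1 + \alpha_2 f_2}{\Z}(S) = \E\bigl[(\alpha_1 f_1 + \alpha_2 f_2)(\X) \,\bigm|\, do(\X_S = \x_S), \X \in \Z\bigr].
\end{align*}
Since this expectation is an integral against the fixed (signed-coefficient-free) density $p_{\Z, \x_S}$, linearity of the Lebesgue integral immediately yields
\begin{align*}
\valgeneric{\alpha_1 f_1 + \alpha_2 f_2}{\Z}(S) = \alpha_1 \valgeneric{f_1}{\Z}(S) + \alpha_2 \valgeneric{f_2}{\Z}(S),
\end{align*}
which is the first assertion. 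One mild point to note is that the density $p_{\Z, \x_S}$ depends only on the set $\Z$, the interventional distribution, and $\x_S$, but not on the function being evaluated, so the same measure is used on both sides and the linearity passes through cleanly.

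For the Shapley-value statement, I would substitute the value-function linearity into the definition
\begin{align*}
\atr{i}{f}^{f} = \sum_{S \subseteq [d]\setminus\{i\}} \frac{|S|!(d-|S|-1)!}{d!}\bigl(\valgeneric{f}{\Z}(S\cup\{i\}) - \valgeneric{f}{\Z}(S)\bigr),
\end{align*}
apply the just-proved identity to both $\valgeneric{\alpha_1 f_1 + \alpha_2 f_2}{\Z}(S\cup\{i\})$ and $\valgeneric{\alpha_1 f_1 + \alpha_2 f_2}{\Z}(S)$, and then rearrange the finite sum by pulling the constants $\alpha_1, \alpha_2$ outside, giving $\alpha_1 \atr{i}{f_1}^{f_1} + \alpha_2 \atr{i}{f_2}^{f_2}$.

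There is no real obstacle here — the argument is essentially a bookkeeping exercise in linearity of expectation and linearity of finite sums. The only subtlety worth flagging is an implicit well-definedness requirement: one must assume $\p(\X\in\Z \mid do(\X_S = \x_S)) > 0$ for every $S$ appearing in the Shapley sum (consistent with the definition of $\valgeneric{\cdot}{\Z}$), so that each term is well-defined; this is already part of the standing hypothesis for ManifoldShap and hence requires no extra work.
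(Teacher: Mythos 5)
Your proof is correct and takes essentially the same route as the paper, which simply invokes linearity of the expectation $\valfunset{S}{f}{\Z}$ under the fixed measure $p_{\Z, \x_S}$ and then the linearity of the Shapley weighted sum. Your additional observations --- that $p_{\Z, \x_S}$ does not depend on the function being evaluated, and that well-definedness requires $\p(\X\in\Z \mid do(\X_S = \x_S)) > 0$ for every $S$ --- are sound and consistent with the paper's standing assumptions.
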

	\begin{proof}
		Follows straightforwardly from the definition of $\valgeneric{f}{\Z}(S)$ and the linearity of the expectation $\valfunset{S}{f}{\Z}$.
	\end{proof}
	
\subsection{Robustness and Causal Accuracy of ManifoldShap}\label{subsec:tv-distance}
	\blue{this holds for general DAGs}
We consider the family of value functions of the form $v_{f, p_S}(S) = \E_{\X \sim p_S}[f(\X)]$ for some measure $p_S$. Both, Interventional Shapley and ManifoldShap are part of this family with corresponding densities $p^{\textup{do}}_{\x_S} (\y) \coloneqq p(\y \mid do(\X_S=\x_S))$ and $p_{\Z, \x_S}(\y) \coloneqq \frac{p(\y \mid do(\X_S=\x_S)) \ind(\y\in\Z)}{\p(\X \in \Z \mid do(\X_S=\x_S))}$ respectively. Next, we show that $p_{\Z, \x_S}$ minimises the total variation distance with the interventional distribution $p^{\textup{do}}_{\x_S}$ while satisfying subspace robustness in definition \ref{subspacerobustness}.
\begin{proposition}\label{tv-distance}
    The measure $p_{\Z, \x_S}$ satisfies
    \begin{align*}
        p_{\Z, \x_S} \in \arg\min_{p_S} \{\textup{TV}(p_S, p^{\textup{do}}_{\x_S}): v_{f, p_S} \textup{  is strong T-robust on subspace $\Z$} \}
    \end{align*}
\end{proposition}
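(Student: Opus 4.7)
The plan is to prove this in two steps. First, I would show that the T-robustness constraint forces any admissible $p_S$ to be supported on $\Z$, so the feasible set is exactly the probability measures with $p_S(\Z^c)=0$. Second, I would show that among such measures, $p_{\Z,\x_S}$ attains the minimum TV distance to $p^{\textup{do}}_{\x_S}$, with the minimum value equal to $p^{\textup{do}}_{\x_S}(\Z^c)$.

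For the first step, suppose $p_S$ is not supported on $\Z$, i.e., $c \coloneqq p_S(\Z^c)>0$. Given any $M>0$, I would construct $f_1$ and $f_2$ that agree on $\Z$ but satisfy $f_1-f_2=M$ on $\Z^c$. Then $\sup_{\x\in\Z}|f_1(\x)-f_2(\x)|=0$, so strong T-robustness on $\Z$ would demand $|v_{f_1,p_S}(S)-v_{f_2,p_S}(S)|\le T\cdot 0 = 0$, yet $|v_{f_1,p_S}(S)-v_{f_2,p_S}(S)|=Mc$ can be made arbitrarily large. This contradiction forces $p_S(\Z^c)=0$. Conversely, if $p_S$ is supported on $\Z$, then $|v_{f_1,p_S}(S)-v_{f_2,p_S}(S)|=|\int_{\Z}(f_1-f_2)\,dp_S|\le \sup_{\x\in\Z}|f_1(\x)-f_2(\x)|$, giving strong T-robustness with $T=1$. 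Hence the feasible set is precisely the probability measures supported on $\Z$.

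For the second step, write $\mu = p^{\textup{do}}_{\x_S}$ and let $q$ be any probability measure supported on $\Z$. Splitting the TV integral gives
\[
2\,\textup{TV}(q,\mu) = \int_{\Z}|q(\y)-\mu(\y)|\,d\y + \mu(\Z^c).
\]
By the triangle inequality for the $L^1$ norm, $\int_{\Z}|q-\mu|\,d\y \ge |\int_{\Z}(q-\mu)\,d\y| = |1-\mu(\Z)| = \mu(\Z^c)$, so $\textup{TV}(q,\mu)\ge \mu(\Z^c)$. For the candidate $p_{\Z,\x_S}(\y)=\mu(\y)\mathds{1}(\y\in\Z)/\mu(\Z)$, one directly checks that $p_{\Z,\x_S}(\y)-\mu(\y) = \mu(\y)(1-\mu(\Z))/\mu(\Z)\ge 0$ on $\Z$, so the integrand is non-negative and the bound is saturated. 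Therefore $\textup{TV}(p_{\Z,\x_S},\mu)=\mu(\Z^c)$, attaining the lower bound, and $p_{\Z,\x_S}\in\arg\min$.

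The main subtlety, rather than an obstacle, is the first step: one must justify that the perturbation argument really forces $p_S(\Z^c)=0$ for \emph{any} finite $T$, not just for $T=1$. The construction above works for every finite $T$ since it requires $|v_{f_1,p_S}-v_{f_2,p_S}|\le 0$ when the $\sup$ over $\Z$ vanishes, which is a constraint independent of $T$. A minor technical point is admissibility of the constructed $f_1,f_2$; since the original definition of T-robustness in the paper places no regularity on the models beyond the $\sup$ bound on $\Z$, indicator-type perturbations are allowed, and if one prefers bounded continuous perturbations, a smoothing argument gives the same conclusion.
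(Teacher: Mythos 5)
Your proof is correct and follows essentially the same route as the paper's: first an adversarial perturbation off $\Z$ (the paper uses $f_2 = f_1 + \delta\ind(\x\in\Z) + K\ind(\x\notin\Z)$ with $K$ arbitrary, you use the $\delta=0$ variant) to force $p_S(\Z^c)=0$, then the same split-and-triangle-inequality lower bound $\textup{TV}(p_S, p^{\textup{do}}_{\x_S}) \geq \p(\X\notin\Z \mid do(\X_S=\x_S))$, attained by the renormalised restriction $p_{\Z,\x_S}$. Your explicit verification that measures supported on $\Z$ are feasible (strong T-robust with $T=1$) is a small completeness bonus; the paper leaves this to its Proposition \ref{manshap-subspace-robustness}.
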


\begin{proof}
If $v_{f, p_S}$ is strong T-robust on subspace $\Z$, then consider functions $f_1, f_2$ satisfying \[
    f_2(\x) \coloneqq f_1(\x) + \delta \ind(\x\in\Z) + K \ind(\x\not\in\Z)
\] 
for some $\delta, K > 0$. Then, $\max_{\x\in\Z} |f_1(\x) - f_2(\x) | = \delta$. Moreover, 
\begin{align*}
    | v_{f_1, p_S}(S) - v_{f_2, p_S}(S)| =& | \E_{\X \sim p_S}[f_1(\X)] - \E_{\X \sim p_S}[f_2(\X)]|\\
    =& \delta p_S(\X \in Z) + K p_S(\X \not\in Z)
\end{align*}
Since we can pick $K$ to be arbitrarily large, $v_{f, p_S}$ satisfies strong T-robustness on subspace $\Z$ only if $p_S(\X \not\in Z) = 0$.

Next, note that if $p_S(\X \in\Z) = 1$, 
\begin{align*}
    &\textup{TV}(p_S, p^{\textup{do}}_{\x_S}) \\
    =& 1/2 \int_{\y} | p_S(\y) - p^{\textup{do}}_{\x_S}(\y) | \mathrm{d}\y\\
    =& 1/2 \int_{\y} | p_S(\y) - p(\y \mid do(\X_S=\x_S)) | \mathrm{d}\y\\
    =& 1/2 \int_{\y \in \Z }| p_S(\y) - p(\y \mid do(\X_S=\x_S)) | \mathrm{d}\y + 1/2 \int_{\y \not\in \Z }| p_S(\y) - p(\y \mid do(\X_S=\x_S)) | \mathrm{d}\y\\
    \geq& 1/2 \left|\int_{\y \in \Z } p_S(\y) -  p(\y \mid do(\X_S=\x_S)) \mathrm{d}\y \right| + 1/2 \left|\int_{\y \not\in \Z } p_S(\y) -  p(\y \mid do(\X_S=\x_S)) \mathrm{d}\y \right| \\
    =& 1/2 \left| p_S(\X\in\Z) -  \p(\X\in\Z \mid do(\X_S=\x_S))\right| + 1/2| p_S(\X\not\in\Z) - \p(\X \not\in \Z \mid do(\X_S=\x_S))|\\
    =& 1/2  \left(1 - \p(\X \in \Z \mid do(\X_S=\x_S))\right) + 1/2 \p(\X \not\in \Z \mid do(\X_S=\x_S)) \\
    =& 1/2 \int_{\y \in \Z} \left| \frac{p(\y \mid do(\X_S=\x_S))\ind(\y\in\Z)}{\p(\X \in \Z \mid do(\X_S=\x_S))}  - p(\y \mid do(\X_S=\x_S)) \right| \mathrm{d}\y + 1/2 \p(\X \not\in \Z \mid do(\X_S=\x_S))\\
    =& 1/2 \int_{\y} \left| \frac{p(\y \mid do(\X_S=\x_S))\ind(\y\in\Z)}{\p(\X \in \Z \mid do(\X_S=\x_S))}  - p(\y \mid do(\X_S=\x_S)) \right|\mathrm{d}\y \\
    =& \textup{TV}(p_{\Z, \x_S}, p^{\textup{do}}_{\x_S})
\end{align*}
\end{proof}
Proposition \ref{tv-distance} shows that among all the value functions of the form $v_{f, p_S}$ which are strong T-robust on subspace $\Z$, ManifoldShap provides the \textit{best} approximation to Interventional Shapley values. This further highlights that ManifoldShap provides a compromise between on and off manifold value functions -- it satisfies subspace robustness while also approximating causal contribution of features.

\newpage
\section{PROOFS}\label{proofs}
\paragraph{Proof of Lemma \ref{manifoldShap}.}
\begin{proof}
Using the definition of ManifoldShap, we get that 
\begin{align*}
    \valgeneric{f}{\Z}(S) =& \int_{\y} f(\y) p_{\Z, \x_S}(\y) \mathrm{d}\y\\
    =& \int_{\y} f(\y) \frac{p(\y \mid do(\X_S=\x_S))\ind(\y \in \Z) }{\p(\X \in \Z \mid do(\X_S=\x_S))} \mathrm{d}\y \\
    =& \frac{1 }{\p(\X \in \Z \mid do(\X_S=\x_S))} \int_{\y} f(\y)\ind(\y \in \Z) p(\y \mid do(\X_S=\x_S))\mathrm{d}\y \\
    =& \frac{\E[f(\X)\ind(\X \in \Z) \mid do(\X_S=\x_S)]}{\p(\X \in \Z \mid do(\X_S=\x_S))}.
\end{align*}
\end{proof}
\paragraph{Proof of Proposition \ref{t-robust}.}
	\begin{proof}
	\begin{align}
		&\max_{\textbf{x}} | f_1(\textbf{x}) - f_2(\textbf{x}) | p(\textbf{x}) \leq \delta \nonumber \\
		&\implies \sup_{\textbf{x} \in \man} | f_1(\textbf{x}) - f_2(\textbf{x}) | p(\textbf{x}) \leq \delta \nonumber \\
		&\implies \sup_{\textbf{x} \in \man} | f_1(\textbf{x}) - f_2(\textbf{x}) | \epsilon \leq \sup_{\textbf{x} \in \man} | f_1(\textbf{x}) - f_2(\textbf{x}) | p(\textbf{x}) \leq \delta \nonumber\\
		&\implies \sup_{\textbf{x} \in \man} | f_1(\textbf{x}) - f_2(\textbf{x}) |  \leq \delta/\epsilon \nonumber
	\end{align}
	Using the above, 
	\begin{align}
		| \valgeneric{f_1}{\man}(S) - \valgeneric{f_2}{\man}(S) | =& |\valfunset{S}{f_1}{\man} - \valfunset{S}{f_2}{\man} |  \nonumber\\
		\leq& \E[| f_1(\X) - f_2(\X) | \mid do(\X_S = \x_S), \X \in \man]\nonumber \\
		\leq& \sup_{\textbf{x} \in \man} | f_1(\textbf{x}) - f_2(\textbf{x}) |  \leq \delta/\epsilon \nonumber
	\end{align}
\end{proof}

\paragraph{Proof of Proposition \ref{manshap-subspace-robustness}.}
\begin{proof}
Let $\sup_{\x\in\Z'} |f_1(\x) - f_2(\x)|\leq \delta$. Then, for any $S\subseteq [d]$,
\begin{align*}
    |\valgeneric{f_1}{\Z}(S) -\valgeneric{f_2}{\Z}(S)  | =& |\valfunset{S}{f_1}{\Z}  - \valfunset{S}{f_2}{\Z}| \\
    \leq& \E [| f_1(\X) - f_2(\X) | \mid do(\X_S = \x_S), \X \in \Z ] \\
    \leq& \sup_{\x\in\Z}|f_1(\x) - f_2(\x)|\\
    \leq& \sup_{\x\in\Z'}|f_1(\x) - f_2(\x)|\leq \delta. 
\end{align*}
\end{proof}

\paragraph{Proof of Proposition \ref{subspace-robustness-causalshap}.}
\begin{proof}
    Let $S = \emptyset$, then $v^{\textup{IS}}_{\textbf{x}, f_1}(S) = v^{\textup{CES}}_{\textbf{x}, f_1}(S) = v^{\textup{MS}}_{\textbf{x}, f_1}(S) = \E[f(\X)]$. Let $f_2(\x) \coloneqq f_1(\x) + K\ind(\x\not\in\Z')$ for some  $K>0$. Then, we have that $\sup_{\x\in\Z'} |f_1(\x) - f_2(\x)|=0$. Moreover,
    \begin{align*}
        |\E[f_1(\X)] - \E[f_2(\X)]| = |K \E[\ind(\X\not\in \Z')]| = K\p(\X\not\in\Z') > 0.
    \end{align*}
    Since we can choose $K$ to be arbitrarily big, it follows that $|\E[f_1(\X)] - \E[f_2(\X)]|$ is not bounded for general functions $f_1, f_2$ satisfying $\sup_{\x\in\Z'} |f_1(\x) - f_2(\x)|\leq \delta$.
    
    Now, for JBShap and RJBShap, define $f_1, f_2$ such that $f_2(\x) \coloneqq f_1(\x) + K\ind(\x\not\in\Z', p(\x)>0)/p(\x)$. Then, 
\[
\sup_{\x\in\Z'} | f_1(\x) - f_2(\x)| = 0.
\]

Let $\x \in \mathbb{R}^d$ be such that $\x \not\in \Z'$ and $p(\x)>0$. Since $\p(\X \in \Z') < 1$, there must exist an $\x \in \mathbb{R}^d$ which satisfies this condition. Then for $S=\emptyset$,
\begin{align*}
    | v^{\textup{J}}_{\x, f_1, p}(S) - v^{\textup{J}}_{\x, f_2, p}(S) | =& |f_1(\x)p(\x) - f_2(\x)p(\x)| \\
    =& K | \ind(\x\not\in\Z', p(\x)>0)| = K.
\end{align*}
Since we can choose $K$ to be arbitrarily big, it follows that $| v^{\textup{J}}_{\x, f_1, p}(\emptyset) - v^{\textup{J}}_{\x, f_2, p}(\emptyset) |$ is not bounded for general functions $f_1, f_2$ satisfying $\sup_{\x\in\Z'}|f_1(\x) - f_2(\x)|\leq \delta$. 

Moreover, we have that for $S = \emptyset$,
\begin{align*}
    | v^{\textup{RJ}}_{\x, f_1, p}(S) - v^{\textup{RJ}}_{\x, f_2, p}(S) | =& |\E[f_1(\X)p(\X)] - \E[f_2(\X)p(\X)]| \\
    =& K | \E[\ind(\X\not\in\Z', p(\X) > 0)] | \\
    =& K | \E[\ind(\X\not\in\Z')] | \\
    =& K \p(\X\not\in\Z') > 0.
\end{align*}
Since we can choose $K$ to be arbitrarily big, it follows that $|v^{\textup{RJ}}_{\x, f_1, p}(\emptyset) - v^{\textup{RJ}}_{\x, f_2, p}(\emptyset)| $ is not bounded for general functions $f_1, f_2$ satisfying $\sup_{\x\in\Z'}|f_1(\x) - f_2(\x)|\leq \delta$. 
\end{proof}

\paragraph{Proof of Proposition \ref{sensitivity}.}
	\begin{proof}
	Recall that in the setting we are considering, the interventional distribution $p(\X_{\bar{S}}\mid do(\X_S = \x_S))$ is equal to the marginal distribution $p(\X_{\bar{S}})$.
	
    Let $S$ be such that $i \not\in S$, and let $\x \in \Z$ be any point. Then, $\ind((\x_S, \X_{\bar{S}}) \in \Z) = \prod_{j \in \bar{S}}\ind(X_j \in \Z_j)$. Using the fact that $\ind(X_i \in \Z_i)\overset{\textup{a.s.}}{=}1$, we get that $\ind(\X_{\bar{S}} \in \Z) \overset{\textup{a.s.}}{=}\prod_{j \in \bar{S}\setminus \{i\}}\ind(X_j \in \Z_j)$.
	\begin{align}
		\valgeneric{f}{\Z}(S) &= \frac{\E[f(\textbf{x}_S, \textbf{X}_{\bar{S}}) \ind((\textbf{x}_S, \textbf{X}_{\bar{S}}) \in \Z) ] }{\E[ \ind((\textbf{x}_S, \textbf{X}_{\bar{S}}) \in \Z)]} \nonumber \\
		&= \frac{\E[f(\textbf{x}_S, \textbf{X}_{\bar{S}}) \prod_{j \in \bar{S}}\ind(X_j \in \Z_j) ] }{\E[ \prod_{j \in \bar{S}}\ind(X_j \in \Z_j) ]} \nonumber \\
		&= \frac{\E[f(\textbf{x}_S, \textbf{X}_{\bar{S}}) \prod_{j \in \bar{S}\setminus\{i\}}\ind(X_j \in \Z_j) ] }{\E[ \prod_{j \in \bar{S}\setminus\{i\}}\ind(X_j \in \Z_j) ]} \nonumber \\
        &= \frac{\E[f(\textbf{x}_{S\cup \{i\}}, \textbf{X}_{\bar{S} \setminus \{i\}}) \prod_{j \in \bar{S}\setminus\{i\}}\ind(X_j \in \Z_j) ] }{\E[ \prod_{j \in \bar{S}\setminus\{i\}}\ind(X_j \in \Z_j) ]} \nonumber \\
		&= \frac{\E[f(\textbf{x}_{S \cup \{i\}} , \textbf{X}_{\bar{S}\setminus\{i\} }) \ind((\textbf{x}_{S \cup \{i\}}, \textbf{X}_{\bar{S}\setminus\{i\}}) \in \Z)  ] }{\E[ \ind((\textbf{x}_{S \cup \{i\}}, \textbf{X}_{\bar{S}\setminus\{i\}}) \in \Z) ]} = \valgeneric{f}{\Z}(S\cup \{i\}) \nonumber 
	\end{align}
	where, in the second last step above we use the fact that, $f(\textbf{x})$ is independent of $x_i$.
	
	\blue{this only works in the setting of \citet{lshap}. Make this more general.}
\end{proof}

\paragraph{Proof of Proposition \ref{symmetry}.}
	\begin{proof}
	\textbf{Notation:} Let $m(\x)$ be any function. We use the notation $m(\X_{S} = \x_S, \X_{\bar{S}} = \x'_{\bar{S}})$ to explicitly denote $m(\x_S, \x'_{\bar{S}})$.
	
	Suppose $m(\textbf{x})$ is a function symmetric in components $i$ and $j$. Then, if $S \subseteq [d] \setminus \{i, j\}$. Then, 
	\begin{align}
		&\E[m(\textbf{x}_{S \cup \{i\}}, \textbf{X}_{\bar{S}\setminus \{i\}})] \nonumber \\ 
		=& \int_{\textbf{X}'_{\bar{S}\setminus \{i\}}} m(\textbf{X}_{S \cup \{i\}} = \textbf{x}_{S \cup \{i\}}, \textbf{X}_{\bar{S}\setminus \{i\}} =\textbf{X}'_{\bar{S}\setminus \{i\}}) \int_{\textbf{Y}_{S \cup \{i\}}} p(\textbf{X}_{S \cup \{i\}} = \textbf{Y}_{S \cup \{i\}}, \textbf{X}_{\bar{S}\setminus \{i\}}=\textbf{X}'_{\bar{S}\setminus \{i\}}) \mathrm{d}\textbf{Y}_{S \cup \{i\}} \mathrm{d}\textbf{X}'_{\bar{S}\setminus \{i\}} \nonumber \\
		=& \int_{\textbf{X}'_{\bar{S}\setminus \{i\}}} m(\textbf{X}_{S \cup \{j\}} = \textbf{x}_{S \cup \{j\}}, \textbf{X}_i = \textbf{X}'_j, \textbf{X}_{\bar{S}\setminus \{i,j\}} =\textbf{X}'_{\bar{S}\setminus \{i,j\}}) \nonumber\\
		&\times \int_{\textbf{Y}_{S \cup \{i\}}} p(\textbf{X}_{S} = \textbf{Y}_{S}, \textbf{X}_j = \textbf{Y}_i, \textbf{X}_i=\textbf{X}'_j,  \textbf{X}_{\bar{S}\setminus \{i, j\}}=\textbf{X}'_{\bar{S}\setminus \{i,j\}}) \mathrm{d}\textbf{Y}_{S \cup \{i\}} \mathrm{d}\textbf{X}'_{\bar{S}\setminus \{i\}} \nonumber 
	\end{align}
	where, in the last step above we use the fact that both, $m(\textbf{x})$ and $p(\textbf{x})$ are symmetric in components $i$ and $j$ and $x_i = x_j$. Next, relabelling the dummy variables $\textbf{X}'_j$ as $\textbf{X}'_i$ and $\textbf{Y}_i$ as $\textbf{Y}_j$, the above becomes
	\begin{align}
		&\int_{\textbf{X}'_{\bar{S}\setminus \{j\}}} m(\textbf{X}_{S \cup \{j\}} = \textbf{x}_{S \cup \{j\}}, \textbf{X}_{\bar{S}\setminus \{j\}} =\textbf{X}'_{\bar{S}\setminus \{j\}}) \int_{\textbf{Y}_{S \cup \{j\}}} p(\textbf{X}_{S \cup \{j\}} = \textbf{Y}_{S \cup \{j\}}, \textbf{X}_{\bar{S}\setminus \{j\}}=\textbf{X}'_{\bar{S}\setminus \{j\}}) \mathrm{d}\textbf{Y}_{S \cup \{j\}} \mathrm{d}\textbf{X}'_{\bar{S}\setminus \{j\}} \nonumber\\
		&= \E[m(\textbf{x}_{S \cup \{j\}}, \textbf{X}_{\bar{S}\setminus \{j\}})] \nonumber
	\end{align}

	Next, we use the fact that the functions $m_1(\x) \coloneqq \ind(\x\in\Z)$ and $m_2(\textbf{x}) \coloneqq f(\textbf{x}) \ind(\textbf{x} \in \Z)$ are symmetric in components $i$ and $j$.
	Therefore, using the result above, we get that,
	\begin{align}
		\valgeneric{f}{\Z}(S\cup \{i\}) =& \frac{\E[f(\textbf{x}_{S \cup \{i\}} , \textbf{X}_{\bar{S}\setminus\{i\} }) \ind((\textbf{x}_{S \cup \{i\}}, \textbf{X}_{\bar{S}\setminus\{i\}}) \in \Z) ] }{\E[ \ind((\textbf{x}_{S \cup \{i\}}, \textbf{X}_{\bar{S}\setminus\{i\}}) \in \Z)]} \nonumber \\
		=& \frac{\E[m(\textbf{x}_{S \cup \{i\}} , \textbf{X}_{\bar{S}\setminus\{i\} })]}{\E[ \ind((\textbf{x}_{S \cup \{i\}}, \textbf{X}_{\bar{S}\setminus\{i\}}) \in \Z)]} \nonumber \\
		=& \frac{\E[m(\textbf{x}_{S \cup \{j\}} , \textbf{X}_{\bar{S}\setminus\{j\} })]}{\E[ \ind((\textbf{x}_{S \cup \{j\}}, \textbf{X}_{\bar{S}\setminus\{j\}}) \in \Z)]} \nonumber \\
		=& \valgeneric{f}{\Z}(S\cup \{j\}) \nonumber
	\end{align}
\end{proof}

\begin{proposition}\label{optimality}
    Let $\alphaman$ be as defined in Def \ref{mass-manifold} and let $\Z$ be any set with $\p(\X\in\Z) \geq  \p(\X\in\alphaman)$. Then, if $\epsilon^{(\alpha)} > 0$, we have that $|\Z| \geq |\alphaman|$, where $|\mathcal{S}| \coloneqq \int_{\mathcal{S}}\mathrm{d}\x$.
\end{proposition}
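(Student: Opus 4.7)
The plan is to use the classical argument that highest-density regions are volume-minimizing among all sets with a given probability mass, adapted to the super-level set definition of $\alphaman$. Write $A \coloneqq \alphaman = \{\x : p(\x) > \epsilon^{(\alpha)}\}$ and let $B \coloneqq \Z$ with $\p(\X \in B) \geq \p(\X \in A)$. The idea is to partition both sets via $A \cap B$, $A \setminus B$, $B \setminus A$, then use the fact that $p$ is strictly above $\epsilon^{(\alpha)}$ on $A \setminus B$ and at most $\epsilon^{(\alpha)}$ on $B \setminus A$ to translate a probability inequality into a Lebesgue-measure inequality.

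First I would observe that
\begin{align*}
\p(\X \in B) - \p(\X \in A) = \p(\X \in B \setminus A) - \p(\X \in A \setminus B) \geq 0,
\end{align*}
so $\p(\X \in B \setminus A) \geq \p(\X \in A \setminus B)$. Next, since $p(\x) > \epsilon^{(\alpha)}$ for every $\x \in A$, we get
\begin{align*}
\p(\X \in A \setminus B) = \int_{A \setminus B} p(\x)\,\mathrm{d}\x \geq \epsilon^{(\alpha)} \, |A \setminus B|,
\end{align*}
while $p(\x) \leq \epsilon^{(\alpha)}$ on $A^c \supseteq B \setminus A$ gives
\begin{align*}
\p(\X \in B \setminus A) = \int_{B \setminus A} p(\x)\,\mathrm{d}\x \leq \epsilon^{(\alpha)} \, |B \setminus A|.
\end{align*}
Chaining these three inequalities and dividing by $\epsilon^{(\alpha)} > 0$ yields $|B \setminus A| \geq |A \setminus B|$, and adding $|A \cap B|$ to both sides gives $|B| \geq |A|$, which is the claim.

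The only subtle point is that $A^c$ might only satisfy $p(\x) \leq \epsilon^{(\alpha)}$ rather than a strict inequality, so I would be careful to use the strict inequality $p(\x) > \epsilon^{(\alpha)}$ on $A$ and the weak inequality on $A^c$; both directions are compatible with the above chain. The assumption $\epsilon^{(\alpha)} > 0$ is needed to cancel $\epsilon^{(\alpha)}$ (if it were $0$, the bound $\epsilon^{(\alpha)} |B\setminus A| \geq \epsilon^{(\alpha)} |A \setminus B|$ would be vacuous). No obstacle beyond this appears, since we may assume $A$ and $B$ are measurable and that $\alphaman$ is well-defined as stated in Definition \ref{mass-manifold}.
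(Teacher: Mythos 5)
Your proof is correct and follows essentially the same argument as the paper's: both decompose via $\Z \setminus \alphaman$, $\alphaman \setminus \Z$, and $\Z \cap \alphaman$, bound the density above by $\epsilon^{(\alpha)}$ off the manifold and below by $\epsilon^{(\alpha)}$ on it, and divide by $\epsilon^{(\alpha)} > 0$ to convert the probability inequality into the Lebesgue-measure inequality. Your explicit handling of the strict inequality $p(\x) > \epsilon^{(\alpha)}$ on $\alphaman$ versus the weak inequality on its complement is in fact slightly more careful than the paper's statement of that step.
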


\paragraph{Proof of Proposition \ref{optimality}.}
\begin{proof}
\begin{align*}
    |\Z| - |\alphaman| =& \left(|\Z \setminus \alphaman | +|\Z \cap \alphaman |\right) -  \left(|\alphaman \setminus \Z | +|\Z \cap \alphaman  | \right) \\
    =& |\Z \setminus \alphaman | - |\alphaman \setminus \Z |
\end{align*}
Similarly, 
\begin{align*}
    0 \leq& \p(\X\in \Z) - \p(\X\in\alphaman) \\
    =& \left(\p(\X \in \Z \setminus \alphaman) + \p(\X \in \Z \cap \alphaman) \right) - \left(\p(\X \in \alphaman \setminus \Z) + \p(\X \in \Z \cap \alphaman) \right)\\
    =& \p(\X \in \Z \setminus \alphaman) - \p(\X \in \alphaman \setminus \Z) \\
    =& \int_{\Z \setminus \alphaman} p(\x) \mathrm{d}\x - \int_{\alphaman \setminus \Z} p(\x) \mathrm{d}\x \\
    \leq& \int_{\Z \setminus \alphaman} \epsilon^{(\alpha)} \mathrm{d}\x - \int_{\alphaman \setminus \Z} \epsilon^{(\alpha)} \mathrm{d}\x \\
    =& \epsilon^{(\alpha)} \left(|\Z \setminus \alphaman | - |\alphaman \setminus \Z |\right)
\end{align*}
In the second last step above, we use the fact that $p(\x) \geq \epsilon^{(\alpha)} \iff \x \in \alphaman$. Using the condition $\epsilon^{(\alpha)}>0$, we get that
\begin{align*}
    |\Z| - |\alphaman| = |\Z \setminus \alphaman | - |\alphaman \setminus \Z | \geq 0
\end{align*}
\end{proof}

\newpage
\section{ALTERNATIVE METHODOLOGIES OF COMPUTING MANIFOLDSHAP}\label{subsec:manshap-alternative-methods}
In this section, we outline alternative methodologies of computing ManifoldShap value function. As before, we assume that we can sample from the interventional distribution $p(\X_{\bar{S}} \mid do(\X_S = \x_S))$ for any $S\subseteq [d]$. This is a standard assumption needed to estimate Interventional Shapley.
\subsection{Supervised approach}\blue{fix the notations in this section}
Here, we use the fact that the expectation $g(\x_S) \coloneqq \valfunset{S}{f}{\Z}$ minimises the mean squared error $\mathcal{L}_S(h) = \E_{\tilde{\X}_S \sim p(\X_S), \tilde{\X} \sim p(\X \mid do(\X_S = \tilde{\X}_S), \X\in\Z)}[f(\tilde{\X}) - h(\tilde{\X}_S)]^2$. Using this, we can define a surrogate model $g_{\theta}(\x_S)$ that takes as input coalition of features $\x_S$ (e.g., by masking features in $\bar{S}$) and that is trained to minimise the loss:
\begin{align*}
    \mathcal{L}(\theta) = \E_{\tilde{\X}_S \sim p(\X_S), \tilde{\X} \sim p(\X \mid do(\X_S = \tilde{\X}_S), \X \in \Z)}\E_{S \sim \textup{Shapley}}[f(\tilde{\X}) - g_\theta(\tilde{\X}_S)]^2
\end{align*}
Here, $S\sim \textup{Shapley}$ corresponds to sampling coalitions from the distribution where the probability assigned to each coalition is the combinatorial factor $|S|!(n - |S| - 1)!/n!$. Additionally, rejection sampling can be used to sample $\tilde{\X} \sim p(\X \mid do(\X_S = \tilde{\X}_S), \X \in \Z)$. To be specific, we repeated sample $\tilde{\X} \sim p(\X \mid do(\X_S = \tilde{\X}_S))$ until the sampled value $\tilde{\X}$ lies in $\Z$.

\subsection{Rejection sampling}\label{subsec:rejection_sampling}
In this subsection, we extend the approach in \citet{shap1} to propose an efficient sampling-based approximation. This approximation uses the following alternative formulation of Shapley values:
\begin{align*}
    \phi_i = \sum_{\pi \in \Pi} \frac{1}{n!} \left[v(\{j: \pi(j)\leq \pi(i) \}) - v(\{j: \pi(j)< \pi(i) \}) \right]
\end{align*}
where $\Pi$ denotes the set of permutations of $N$, and $\pi(j)<\pi(i)$ means that $j$ precedes $i$ under ordering $\pi$. To derive the sampling procedure, we observe that the Shapley value for feature $i$, $\pi_i$, can be written as an average over the set of permutations, i.e., 
\begin{align*}
    \phi_i = \E_{\pi}[v(\{j: \pi(j)\leq \pi(i) \}) - v(\{j: \pi(j)< \pi(i) \})]
\end{align*}
where the permutations $\pi$ are drawn from a uniform distribution over $\Pi$. Using this, we derive the following procedure for obtaining an unbiased and consistent estimation of ManifoldShap.

\begin{algorithm}
\caption{Approximating ManifoldShap value $\phi_i$ for instance $\x \in \Z$.}\label{alg:cap}
\hspace*{\algorithmicindent} \textbf{Input:} Instance $\x$; the desired number of samples $m$; feature $i$ to compute Shapley value for;
\begin{algorithmic}
\State $\phi_i \gets 0$
\For{$j=1$ to $m$}
\State choose a random permutation of features $\pi \in \Pi$
\State $S \gets \{j: \pi(j)< \pi(i) \}$
\State sample $\y_{\bar{S}\setminus \{i\}} \sim p(\X_{\bar{S}\setminus \{i\}} \mid do(\X_{S\cup \{i\}} = \x_{S\cup \{i\}}))$
\While{$(\x_{S\cup \{i\}}, \y_{\bar{S}\setminus \{i\}}) \not \in \Z$}
\State sample $\y_{\bar{S}\setminus \{i\}} \sim p(\X_{\bar{S}\setminus \{i\}} \mid do(\X_{S\cup \{i\}} = \x_{S\cup \{i\}}))$
\EndWhile
\State sample $\textbf{z}_{\bar{S}} \sim p(\X_{\bar{S}} \mid do(\X_S = \x_S))$
\While{$(\x_{S}, \textbf{z}_{\bar{S}}) \not \in \Z$}
\State sample $\textbf{z}_{\bar{S}} \sim p(\X_{\bar{S}} \mid do(\X_S = \x_S))$
\EndWhile
\State $\phi_i \gets \phi_i + (f(\x_{S\cup \{i\}}, \y_{\bar{S}\setminus \{i\}}) - f(\x_{S}, \textbf{z}_{\bar{S}}))$
\EndFor
\State $\phi_i \gets \phi_i/m$\\
\textbf{Return:} $\phi_i$
\end{algorithmic}
\end{algorithm}

\newpage
\section{INTERVENTIONAL SHAPLEY VS CONDITIONAL EXPECTATION SHAPLEY}\label{sec:int-vs-ces}

    \paragraph{Example.}
	Assume that $\mathcal{X} = \{0,1\}^2$, and that the features $X_1, X_2$ follow the causal structure shown below. In this setting, interventional distributions are equivalent to marginal distributions, i.e., $p(\X_{\bar{S}} \mid do(\X_S = \x_S)) = p(\X_{\bar{S}})$.\\

	\begin{center}
	\begin{tikzpicture}
	\tikzset{
    > = stealth,
    every node/.append style = {
        draw = black,
        shape = circle,
        inner sep = 0.5pt,
        minimum size=0.75cm
    },
    every path/.append style = {
        arrows = ->,
    }
    }
    \tikz{
        \node (x1) at (0,1) {$X_1$};
        \node (x2) at (2,1) {$X_2$};
        \node[fill=gray] (z) at (1,2) {\textcolor{white}{$Z$}};
        \node (y) at (1,0) {$Y$};
        \path (z) edge (x1);
        \path (z) edge (x2);
        \path (x1) edge (y);
        \path (x2) edge (y);
    }
    \end{tikzpicture}
    \end{center}
	
	Consider the case where $f(x_1, x_2)=x_1$ and $Z, X_1, X_2$ are binary variables, with 
	\begin{align*}
	    Z =& \begin{cases}
			0 & \textup{w.p. 0.5} \\
			1 & \textup{otherwise}
			\end{cases} \, \\
		X_1 =& Z \, \\
		X_2 =& \begin{cases}
			Z & \textup{w.p. $p$ (for some $p>0$),} \\
			1-Z & \textup{otherwise.}
		\end{cases}
	\end{align*}
	 In this case, $\E[f(X_1, X_2)\mid do(X_2 = x_2)] = \E[f(X_1, X_2)] = \E[f(X_1, x_2)] = 1/2$ and 
	 \[
	 \E[f(X_1, X_2)\mid do(X_1 = x_1, X_2=x_2)] = \E[f(X_1, X_2)\mid do(X_1 = x_1)] = \E[f(x_1, X_2)] = x_1,
	 \]
	 for any $x_2$. 
	 It straightforwardly follows that, in this case, the Interventional Shapley value for feature $x_2$, $\phi_2 = 0$, i.e. Interventional Shapley satisfies the Sensitivity property. 
	
	However, if we use CES instead, we get that 
	\begin{align*}
	    \phi_2 =& \frac{1}{2}\left( \E[f(X_1, X_2)\mid X_2=x_2]- \E[f(X_1, X_2)] + \E[f(X_1, X_2)\mid X_1 = x_1 ,X_2=x_2]- \E[f(X_1, X_2)\mid X_1=x_1] \right)\\
	    =& \frac{1}{2}\left(\E[X_1\mid X_2=x_2] - 1/2 + x_1 - x_1 \right)\\
	    =& \frac{1}{2}\left(\p(X_1=1\mid X_2=x_2) - 1/2 \right)\\
	    =& \frac{1}{2}\left(\frac{1/2 p \ind(x_2 = 1) + 1/2 (1-p) \ind(x_2 = 0)}{1/2} - 1/2 \right)\\
	    =& \frac{1}{2} \left(p\ind(x_2 = 1) + (1-p)\ind(x_2 = 0) - 1/2\right). 
	\end{align*}
	which is non-zero when $p\neq 1/2$. 
	
	This example illustrates that CES value function can lead to misleading Shapley values, especially when the features are highly correlated. Interventional Shapley value function, on the other hand, incorporates the causal effect of \emph{fixing} a set of features $S$, and therefore, yields Shapley values which are unaffected by correlations within the data. 
\newpage

\section{COMPUTING CONDITIONAL EXPECTATION SHAPLEY USING SUPERVISED APPROACH}\label{subsec:CES-comp}

In this work, when the conditional distribution is not tractable analytically, we use the supervised learning approach as in \citet{expondatamanifold} to estimate the conditional expectation in CES. We present this methodology in this section for completeness.

Here, we use the fact that the conditional expectation $g(\x_S) \coloneqq \E[f(\X) \mid \X_S = \x_S]$ minimises the mean squared error $\mathcal{L}_S(\tilde{g}) = \E_{\X \sim p(\X)}[f(\X) - \tilde{g}(\X_S)]^2$. Using this, we can define a surrogate model $g_\theta(\x_S)$ that takes as input coalition of features $\x_S$ (e.g., by masking features in $\bar{S}$) and that is trained to minimise the loss:
\begin{align*}
    \mathcal{L}(\theta) = \E_{\X \sim p(\X)}\E_{S \sim \textup{Shapley}}[f(\X) - g_\theta(\X_S)]^2
\end{align*}
Here, $S\sim \textup{Shapley}$ corresponds to sampling coalitions from the distribution where the probability assigned to each coalition is the combinatorial factor $|S|!(n - |S| - 1)!/n!$. As the surrogate model $g_\theta(\x_S)$ approaches the CES value function $\E[f(\X)\mid \X_S = \x_S]$, the loss $\mathcal{L}(\theta)$ is minimised.
\newpage

 \section{COMPUTING THE MANIFOLD $\man$}\label{subsec:computingman}
	\done{Add a reference to this in the main text}
		If we choose the manifold $\Z$ based on probability density/mass as outlined in Definitions \ref{den-manifold} and \ref{mass-manifold}, we must estimate the region $\man \coloneqq \{\textbf{x}: p(\textbf{x}) > \epsilon\}$. There are various ways to esimate this set, and the methodology used depends on dataset properties, such as the data dimensions, as well as the degree of accuracy sought. Below, we outline two such solutions, which can be used depending on the dataset dimensions:
	\subsection{Using Variational Auto-Encoders for manifold estimation}
	Variational Auto-Encoders (VAEs) have been a popular method of density estimation \citep{autoencoding-vb, vae-intro}.
	Instead of maximising the log likelihood, which may be intractable in general, VAE training involves maximising a lower bound of the log likelihood, called the Evidence Lower Bound (ELBO). In order to do so, the VAEs assume that data are generated from some random process, involving latent random variables $\textbf{z}$, and that a value $\x$ is sampled from a conditional $p_\theta(\textbf{x}\mid \textbf{z})$, also referred to as the likelihood \citep{vae-intro}.
	
	Let $q_\phi(\textbf{z} \mid \textbf{x})$ be a parametrized posterior. Then, we have that 
	\begin{align*}
		\log p_{\theta}(\textbf{x}) &= \log \int \frac{p_\theta(\textbf{x}\mid \textbf{z})p(\textbf{z})}{q_\phi(\textbf{z} \mid \textbf{x})} q_\phi(\textbf{z} \mid \textbf{x}) d\textbf{z} \geq \int \log \left(\frac{p_\theta(\textbf{x}\mid \textbf{z})p(\textbf{z})}{q_\phi(\textbf{z} \mid \textbf{x})}\right) q_\phi(\textbf{z} \mid \textbf{x}) d\textbf{z} \eqqcolon \textup{ELBO}_{\theta, \phi}(\textbf{x})
	\end{align*}
	
	The VAE training therefore involves maximising the expected lower bound $\E[\textup{ELBO}_{\theta, \phi}(\textbf{X})]$ over $\theta, \phi$. Let $(\theta^*, \phi^*) \coloneqq \arg\max_{(\theta, \phi)} \E[\textup{ELBO}_{\theta, \phi}(\textbf{X})]$, then we have that
	\[
	\log p_{\theta^*}(\textbf{x}) \geq \textup{ELBO}_{\theta^*, \phi^*}(\textbf{x}).
	\]
	Therefore, $\exp(\textup{ELBO}_{\theta^*, \phi^*}(\textbf{x})) \geq \epsilon$ implies that $p_{\theta^*}(\textbf{x}) \geq \epsilon$. We can use the ELBO to approximate the manifold:
	\[
	\man \approx \{\textbf{x}: \exp(\textup{ELBO}_{\theta^*, \phi^*}(\textbf{x})) \geq \epsilon\}.
	\]
	Assuming that $p_{\theta^*}(\textbf{x})$ is an accurate density model, we get that the above approximation of the $\epsilon$-manifold is going to be conservative in the sense that it will be a subset of the true $\man$. 
	\blue{Add: Use of kernel density estimators}
	
	\subsection{Thresholded Kernel Density Classification (tKDC)}
	Alternatively, we can use Kernel Density Estimation (KDE) to estimate the manifold $\man$. 
	KDE provides a way of estimating normalized density $\hat{p}(\x)$ using a finite dataset. This can be used to approximate most well-behaved smooth densities. Given $n$ datapoints, KDE provides an estimate $\hat{p}_n(\x)$ with mean squared error that shrinks at rate $O(n^{-\frac{4}{4+d}})$, where $d$ is the dimension of $\x$. This means that with enough data, KDE will identify an accurate density. The same may not be true for parametric methods of density estimation. However, evaluating the density $\hat{p}_n(\x)$ at a point $\x$ is prohibitively expensive when $n$ is large, as it involves the kernel contributions from every point in dataset. 
	
	To circumvent this problem, \citet{gan2017scalable} propose tKDC, a computationally efficient algorithm for classifying points with $p(\x) \geq \epsilon$ using KDE, where classification errors are allowed for densities within $\pm \epsilon\delta$ of the density threshold $\epsilon$ (for a given $\delta > 0$). 
	
	When the dimension of data, $d$, is large, the convergence of $\hat{p}_n(\x)$ is extremely slow. This is reflected in the error term $O(n^{-\frac{4}{4+d}})$ which shrinks slowly with increasing $n$ when $d$ is large. Therefore, while tKDC provides an asymptotically accurate density classification methodology, the convergence can be slow for large $d$, and in this case alternative methodologies like using VAEs may be more feasible.

	\subsection{Choosing the threshold $\epsilon$}
	Our manifold $\man$ is parameterized by a density threshold $\epsilon$. In practice, the probability density may depend on the dataset size, dimensionality and distribution, and as a result the range of density values may vary substantially among different datasets. It is therefore not possible to a priori define threshold values of $\epsilon$. Instead, we specify thresholds in terms of a probability mass $\alpha \in [0, 1]$. That is, we pick a fraction of the data that we choose to classify as having low density and set the threshold accordingly. This corresponds to picking the value of $\epsilon$ to be $\epsilon^{(\alpha)}$, where $\epsilon^{(\alpha)}\coloneqq \sup\{\epsilon\geq 0: \p(\X \in \man)\geq \alpha \}$. In practice, since we do not have access to the true density model, the estimation of $\epsilon^{(\alpha)}$ can be difficult, and we pick the threshold $\epsilon^{(\alpha)}$ based on the quantiles of the observed density estimates $\hat{p}(\x)$ for $\x$ in some held out dataset. The authors in \citet{density_threshold} show that for kernel density estimators this quantile converges to the ideal $\epsilon^{(a)}$ defined above.

\newpage
\section{EXPERIMENTAL RESULTS}\label{sec:exps-app}
\subsection{Experimental details for real-world dataset experiments}\label{subsec:experimental_detals_app}
Here, we explicitly define the models used for the real-world dataset experiments. 
\subsubsection{COMPAS dataset}
\paragraph{Biased classifier.} 
For COMPAS dataset, the biased classifier $f$ is defined as:
\[
f(\x) \coloneqq \ind(\x_{\textup{race}} = \textup{African American}).
\]
Here, $\x_{\textup{race}}$ denotes the race feature of the for the datapoint $\x$.

\paragraph{Out of manifold perturbation.} To perturb the model outside the manifold $\mathcal{Z}$, we construct 2 synthetic binary features (referred to as `unrelated columns') positively correlated with race. 

Let $Z_i$ be i.i.d. random variables with distribution $\textup{Bern}(0.90)$, then the `unrelated columns' feature correponding to the $i$'th datapoint, $\X^{(i)}_{\textup{unrelated column}}$, is defined as follows:
\[
\X^{(i)}_{\textup{unrelated column}} \coloneqq \ind(\X^{(i)}_{\textup{race}} = \textup{African American}) \times Z_i.
\]
Finally, the perturbed classifier model $g_\Z: \mathcal{X}\rightarrow \{0, 1\}$ is defined as follows:
\[
g_\Z(\x) \coloneqq \ind(\x \in \Z)\, f(\x) + \ind(\x \not\in \Z)\, \ind(\x_{\textup{unrelated column}} > 0).
\]

\subsubsection{Communities and crime dataset}
\paragraph{Biased classifier.} 
Likewise, for the CC dataset, the biased classifier $f$ is defined as:
\[
f(\x) \coloneqq \ind(\x_{\textup{percentage of caucasian population}} > 0).
\]

\paragraph{Out of manifold perturbation.}
We again construct 2 synthetic features (referred to as `unrelated columns').
Using the same random variables $Z_i$ as defined above, the `unrelated columns' feature corresponding to the $i$'th datapoint, $\X^{(i)}_{\textup{unrelated column}}$, is defined as follows:
\[
\X^{(i)}_{\textup{unrelated column}} \coloneqq \X^{(i)}_{\textup{percentage of caucasian population}} \times Z_i.
\]
Just like in COMPAS dataset experiments, the perturbed classifier model $g_\Z: \mathcal{X}\rightarrow \{0, 1\}$ is defined as follows:
\[
g_\Z(\x) \coloneqq \ind(\x \in \Z)\, f(\x) + \ind(\x \not\in \Z)\, \ind(\x_{\textup{unrelated column}} > 0).
\]

\subsection{Additional Experiments}
For all experiments in this section, we consider the causal structure in \citet{lshap} (see Figure \ref{fig:dag}), where the true features are formally distinguished from the input features. In this setting, the Interventional Shapley is equivalent to Marginal Shapley.\blue{we use KDE for RJBShap estimation}

	\subsubsection{Off-manifold perturbation}

	\begin{figure*}[h!]
	\centering
	\begin{subfigure}{0.24\textwidth}
			\centering
			\includegraphics[height=1.2in]{./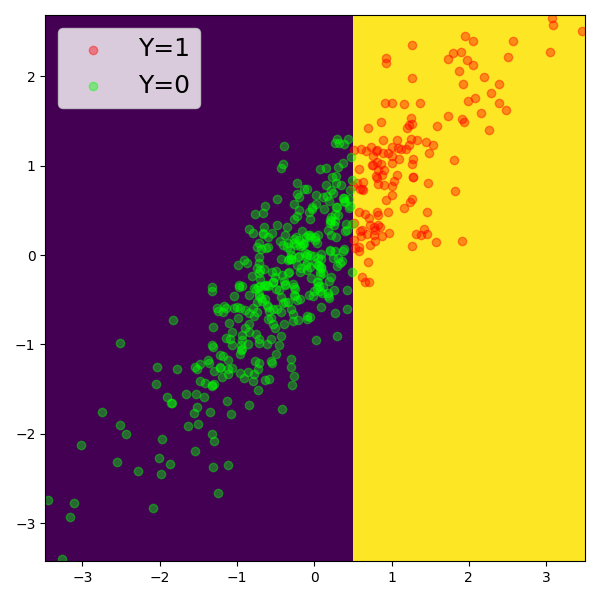}
			\subcaption{Classifier $g_\delta$ for $\delta = 0$.}
			\label{fig:pert-mans-0}
		\end{subfigure}%
	\begin{subfigure}{0.70\textwidth}
		\centering
		\includegraphics[height=1.2in]{./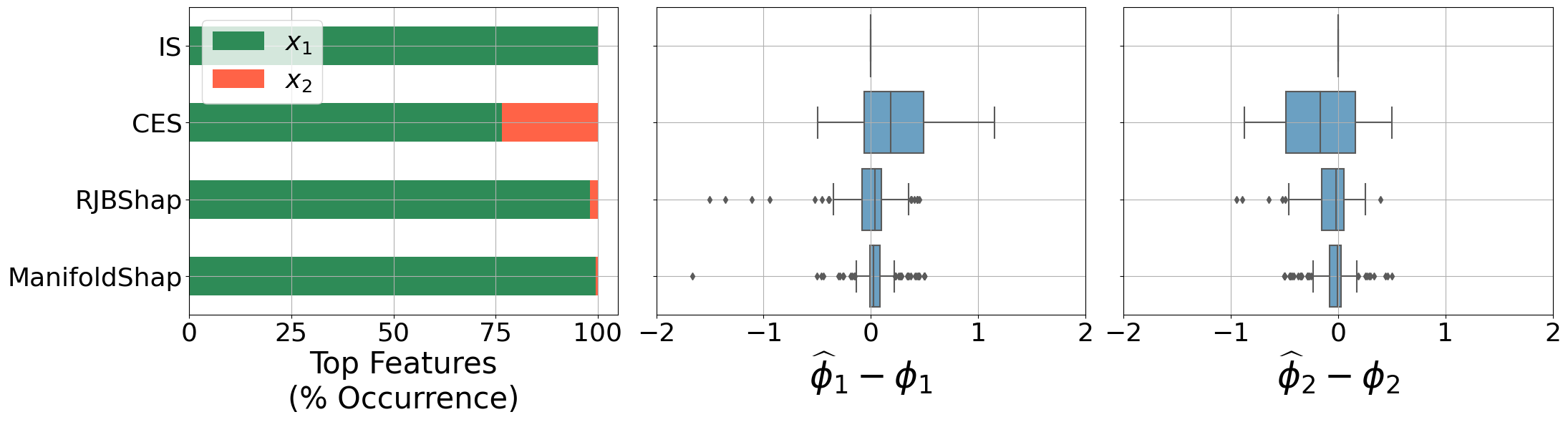}
		 \subcaption{$\delta = 0$}
		\label{fig:shaps0}
	\end{subfigure}\\
	\begin{subfigure}{0.24\textwidth}
			\centering
			\includegraphics[height=1.2in]{./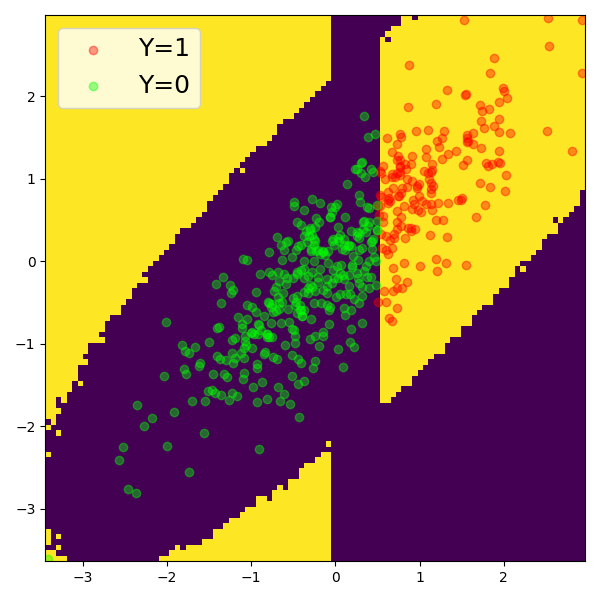}
			\subcaption{Classifier $g_\delta$ for $\delta = 10$.}
			 \label{fig:pert-mans-20}
		\end{subfigure}%
	\begin{subfigure}{0.70\textwidth}
		\centering
		\includegraphics[height=1.2in]{./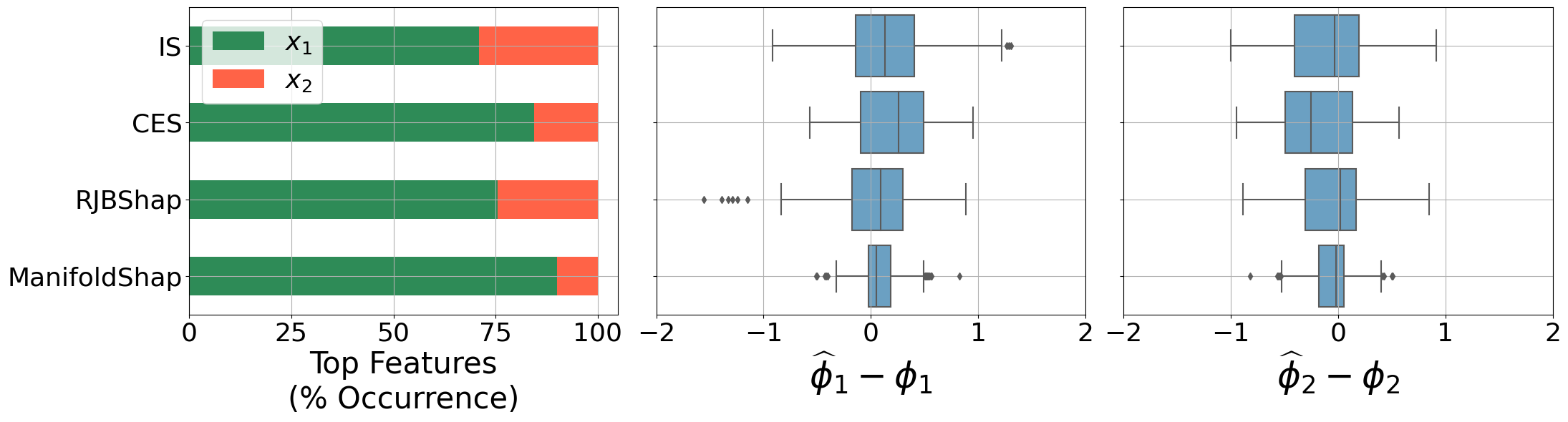}
		 \subcaption{$\delta = 10$}
		\label{fig:shaps10}
	\end{subfigure}
	\caption{\textbf{Left:} \ref{fig:pert-mans-0} and \ref{fig:pert-mans-20} show classifier decision boundaries for $\delta=0, 10$ respectively. \textbf{Right:} \ref{fig:shaps0} and \ref{fig:shaps10} show boxplots of $\hat{\phi}_i - \phi_i$ for $i \in \{1,2\}$ and different off-manifold perturbations.}\label{fig:pert2}
\end{figure*}

	In this experiment we investigate the effect of model perturbation in low density regions on Shapley values obtained using our methodology as well as other baselines. We do so by defining adversarial models, which agree with the ground truth model on the manifold $\alphaman$, but have been perturbed outside the manifold. 
	
	First, we define a ground truth data generating mechanism as described below. 

	\paragraph{Data generating mechanism.}
	
	In this experiment, $\mathcal{Y} = \{0, 1\}$ and $\mathcal{X} \subseteq \mathbb{R}^2$, where:
	\begin{align*}
		\X &\sim \mathcal{N}\left(\begin{pmatrix}
			0 \\
			0
		\end{pmatrix}, \begin{pmatrix}
			1 & 0.90 \\
			0.90 & 1
		\end{pmatrix}\right)\\
	 Y &\coloneqq \ind(X_1 > 1/2).
	\end{align*}

	Next, for the adversarial models, we define the following family of perturbed models.
	
	\paragraph{Perturbed models.}
	We define the following family of perturbed models $g_\delta:\mathcal{X} \rightarrow \{0,1\}$, parameterised by $\delta \in \mathbb{R}$.
	\begin{align*}
		g_\delta(\X) \coloneqq Y \ind(\X \in \alphaman) + \ind((1-\delta) X_1 > 1/2) \ind(\X \not \in \alphaman).
	\end{align*}
	Here, we use VAEs to estimate $\alphaman$ as described in Section \ref{subsec:computingman}, and choose $\alpha=1-10^{-3}$.

	By construction, the classifiers $g_\delta$ should agree with the ground truth on the $\alpha$-manifold, i.e. $g_\delta(\X) = Y$ when $\X \in \alphaman$. However, these classifiers differ from the ground truth outside the $\alpha$-manifold. Figures \ref{fig:pert-mans-0} and \ref{fig:pert-mans-20} show the classifier decision boundaries, along with the original data $(X_1, X_2)$. Each of these classifiers have a test accuracy of at least 99.5\%, and therefore it is impossible to distinguish between them on the data manifold. However, as we will show next, the Interventional Shapley values computed for these classifiers are drastically different.

\begin{figure*}[t]
	\centering
	\includegraphics[height=2.5in]{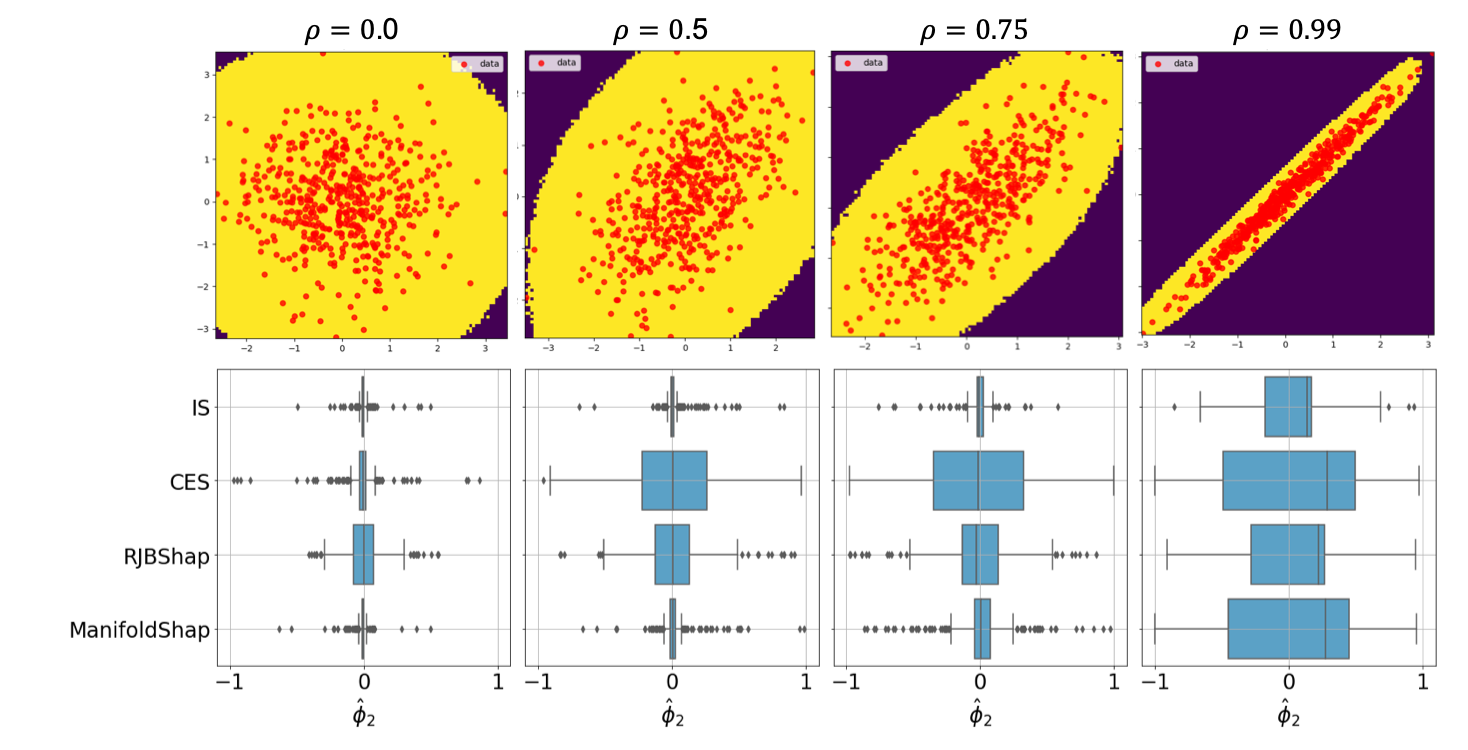}
	\caption{\textbf{Fig (a) (above):} Manifolds $\man$ for different values of $\rho$. The manifold $\man$ is denoted by the yellow region. \textbf{Fig (b) (below):} Boxplots of $\hat{\phi}_2$ for increasing values of $\rho$.}\label{fig:corr2}
    \end{figure*}

\paragraph{Estimating conditional expectation for CES.} 
	For the data generating mechanism described above, the conditional distributions $p(X_2 \mid X_1)$ and $p(X_1 \mid X_2)$ are tractable. In fact, it is straightforward to get that $X_2\mid X_1 \sim \mathcal{N}( 0.90 * X_1, 1-0.90^2)$, and similarly for $X_1\mid X_2$. We use this to estimate the conditional expectation using $m$ Monte Carlo samples from the conditional distributions:
	\begin{align*}
	    \E[f(\X)\mid X_1=x_1] &\approx \frac{1}{m} \sum_{i=1}^m f(x_1, X^i_2)  \hspace{0.3cm} \textup{where }\\
	    X^i_2 &\overset{\textup{i.i.d.}}{\sim} \mathcal{N}( 0.90 * x_1, 1-0.90^2), 
	\end{align*}
	and similarly for $\E[f(\X)\mid X_2=x_2]$. In this experiment, we use $m = 500$. \blue{double-check mathematically correct}

\paragraph{Results.}
We compute the Shapley values for the models on 500 datapoints from a held-out dataset.
In Figure \ref{fig:pert2} we plot the difference between estimated Shapley values and the ground truth Interventional Shapley values, for different methodologies. For a fair comparison between different value functions, we normalise the Shapley values so that $\sum_{i\in \{1,2\}} |\phi_i | = 1$. 

To compute the ground truth IS values $\{\phi_i\}_{i\in \{1,2\}}$, we use the ground truth function $f(\X) = \ind(X_1 > 1/2)$ instead of the perturbed model $g_\delta$, and therefore, the ground truth IS values do not change with increasing off-manifold perturbations $\delta$. Moreover, note that since the ground truth model $f$ is independent of $X_2$, the Shapley values for feature $2$, $\phi_2 = 0$.

The figure shows that when $\delta=0$, i.e., there is no off-manifold perturbation, the errors for IS values, i.e., $\hat{\phi}_i - \phi_i$, are 0. This is because the IS values are equal to the ground truth in this case, as the perturbed model $g_\delta$ is equal to the ground truth model $f$ everywhere. 
CES, on the other hand gives biased Shapley values, as can be seen from the errors $\hat{\phi}_i - \phi_i$ being concentrated away from 0. This happens because of the high positive correlation between the features -- conditional expectation is highly sensitive to feature correlations, unlike marginal expectation. 

For ManifoldShap, the errors are less concentrated around 0 than for IS values. This highlights the reduced causal accuracy in ManifoldShap as a result of restricting function evaluations to the manifold $\alphaman$. However, ManifoldShap values are more accurate than the CES and RJBShap values.

 It can be seen that for $\delta = 10$, the errors in IS values are highest among all the baselines. This highlights the off-manifold nature of IS values, i.e., perturbing model in low-density regions can significantly change the computed Shapley values. Additionally, CES values are biased as the errors are concentrated away from 0. ManifoldShap errors remain largely restricted between -0.2 and 0.2, with error distribution concentrated around 0.

The barplots in Figures \ref{fig:shaps0} and \ref{fig:shaps10} show the most important features as per different value functions for $\delta=0, 10$ respectively. The figure shows that for ground truth model ($\delta=0$), IS values attribute the greatest importance to feature 1 for all datapoints. This is expected as the ground truth model does not depend on $x_2$. For CES, on the other hand,  feature 2 receives greater importance for roughly 25\% datapoints. This is again due to the positive correlation between the features $X_1, X_2$. For $\delta=10$, Figure \ref{fig:shaps10} shows that ManifoldShap attributes least importance to feature 2, among all baselines considered.

The results show that CES and ManifoldShap are in practice less sensitive to off-manifold manipulation compared to IS and RJBShap, however, CES values may be biased when features are highly correlated, whereas ManifoldShap remain closer to the ground truth IS values overall.

\subsubsection{Sensitivity to correlations}\label{subsec:corr}

	In this experiment we investigate the sensitivity of ManifoldShap to increasing correlation among the features, as compared to the other baselines. To this end, we define the following family of data distributions:
	
	\paragraph{Data generating mechanism.}
	In this experiment, $\mathcal{X} \subseteq \mathbb{R}^2$ and $\mathcal{Y} \subseteq \mathbb{R}$. Specifically, 
	\begin{align*}
		\X &\sim \mathcal{N}\left(\begin{pmatrix}
			0 \\
			0
		\end{pmatrix}, \begin{pmatrix}
			1 & \rho \\
			\rho & 1
		\end{pmatrix}\right)  \hspace{0.3cm} \textup{where, $\rho \in (-1, 1)$.}
	\end{align*}
	Moreoever, the ground truth model under consideration is $Y \coloneqq f(\X) = X_1$.
	The parameter $\rho$ corresponds to the correlation between $X_1$ and $X_2$. When $\rho = 0$, $X_1$ and $X_2$ are independent random variables. As $\rho$ increases (decreases), the features get more positively (negatively) correlated. Figure \ref{fig:corr2}a shows the data generated for different values of $\rho$, along with the $\epsilon$-manifold. Here, we choose $\epsilon$ to be the 1st percentile of density values on a held out dataset, i.e. $\epsilon \approx \epsilon^{(0.99)}$.
	
	Using the data generating mechanism described above, we generate data for a given $\rho$, which is then used to estimate Shapley values.
	Note that since $f$ is independent of $X_2$, we would ideally expect the Shapley values corresponding to feature $X_2$, i.e., $\hat{\phi}_2$, to be close to 0. 
	
	\paragraph{Estimating conditional expectation for CES.} 
	Like the previous experiment, the conditional distributions $p(X_2 \mid X_1)$ and $p(X_1 \mid X_2)$ are tractable for the data generating mechanism described above. In fact, $X_2\mid X_1$ follows the Gaussian distributions $\mathcal{N}( \rho X_1, 1-\rho^2)$ (and similarly for $X_1\mid X_2$). We use this to estimate the conditional expectation using $m=500$ Monte Carlo samples like described for previous experiment.

	\begin{figure*}[t]
    \centering
    \includegraphics[height=2.5in]{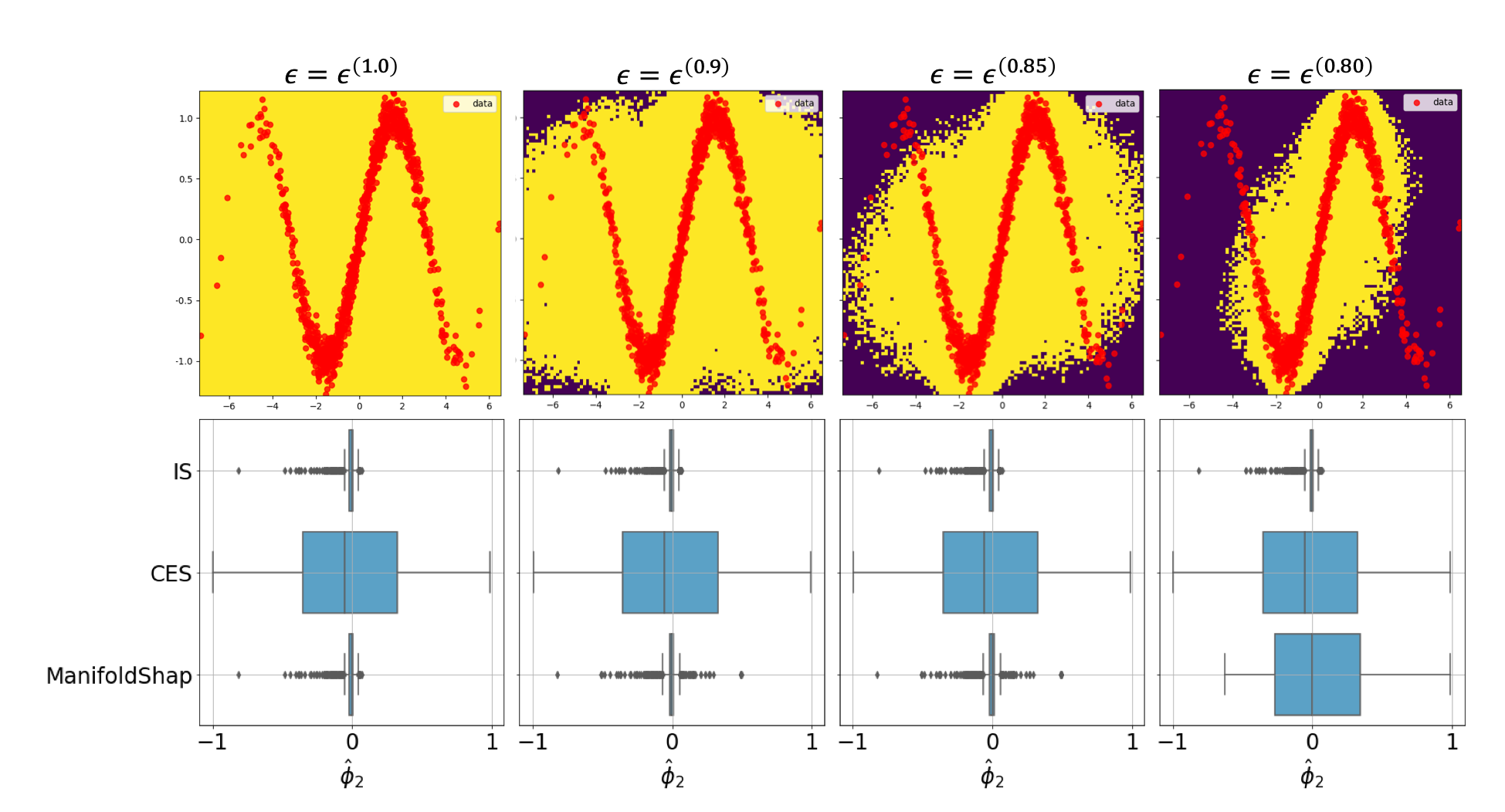}
    \caption{\textbf{Fig (a) (above):} Manifolds $\man$ for different values of $\epsilon$. The manifold $\man$ is denoted by the yellow region. \textbf{Fig (b) (below):} Boxplots of $\hat{\phi}_2$ for increasing values of $\epsilon$.}\label{fig:inc_eps1}
    \end{figure*}
	
	\paragraph{Results.}
	Figure \ref{fig:corr2}b shows the boxplots of $\hat{\phi}_2$ values for the different methodologies and values of correlation $\rho$. For $\rho = 0$, the features are independent, and in this case the CES and IS values are expected to be equal, as the conditional expectation is same as marginal expectation in this case. Therefore, we observe in Figure \ref{fig:corr2}b that CES values are close to IS values when $\rho=0$.

	As $\rho$ increases, the distribution of CES values $\hat{\phi}_2$ gets more spread out, away from the ground truth value of 0. In comparison, both ManifoldShap and IS values remain concentrated around 0, with IS remaining closer to 0. This happens because IS values are not sensitive to feature correlations. Furthermore, ManifoldShap values are significantly less sensitive to increasing $\rho$ as compared to CES values.
	In comparison, it can be seen that the distribution of RJBShap values gets wider as $\rho$ increases, showing that RJBShap is more sensitive to increasing correlation than ManifoldShap values. This is because RJBShap values explicitly depend on the density values $p(\x)$ which changes with changing values of $\rho$.
	
	Finally, when $\rho=0.99$, the features are highly correlated. In this case, the manifold $\man$ is sparse, and as a result the ManifoldShap and CES behave similarly. This is evident from the fact that the boxplots of $\hat{\phi}_2$ for ManifoldShap and CES in Figure \ref{fig:corr2}b are very similar when $\rho=0.99$. This also highlights a potential failure mode of ManifoldShap: when the manifold $\Z$ is sparse, the ManifoldShap may behave similarly to CES, leading to unintuitive explanations.
	
%

%
%
%
%
%
%
%
%
%
%
%
%
%
%
%
%
%
%
%
%
%
%
%
%
%
%
%
%
%

\subsubsection{Dependence on manifold size}
    In this experiment, we investigate how the ManifoldShap values change as the size of $\Z$ decreases. In particular, we investigate the relationship between ManifoldShap, IS and CES as the manifold $\Z=\man$ gets smaller. To do so, we consider $\epsilon \in \{\epsilon^{(1.0)}, \epsilon^{(0.9)}, \epsilon^{(0.85)}, \epsilon^{(0.80)}\}$, where $\epsilon^{(\alpha)}$ is as defined in definition \ref{mass-manifold}. We carry out this experiment on the following data generating mechanism, with $\mathcal{X} \subseteq \mathbb{R}^2$ and $\mathcal{Y}\subseteq \mathbb{R}$:
    
    \paragraph{Sine Wave.}

    \begin{align*}
        X_1 \sim \mathcal{N}(0, 4); \hspace{0.4cm} X_2 \mid X_1 \sim \mathcal{N}(\sin(X_1), 0.01).
    \end{align*}

    Moreoever, the ground truth model under consideration is $Y \coloneqq f(\X) = X_1$.
    Using the data generating mechanism described above, we generate data which is then used to compute Shapley values of $f$.
    Figure \ref{fig:inc_eps1}a shows how the $\epsilon$-manifolds shrinks as $\epsilon$ increases from 0. 
    Here, we use the supervised approach described in Section \ref{subsec:CES-comp} to compute conditional expectation for CES values since the conditional $X_1\mid X_2$ is not easily tractable.
    
    \paragraph{Results for Sine Wave.}

    Figure \ref{fig:inc_eps1} shows the boxplots of $\hat{\phi}_2$ for different values of $\epsilon$. Recall that since the ground truth function is independent of $X_2$, the ground truth IS value $\phi_2=0$. When $\epsilon = \epsilon^{(1.0)}= 0$, ManifoldShap is equivalent to IS, and therefore the values in Figure \ref{fig:inc_eps1} are identical. CES values, on the other hand are concentrated away from 0. This happens because the features are highly coupled in this experiment. 
    
    As $\epsilon$ increases, the ManifoldShap values for $\hat{\phi}_2$ get increasingly spread out. This shows that increasing $\epsilon$ may reduce the causal accuracy of computed ManifoldShap values, despite making them more robust to off-manifold perturbations. However, it is important to note that relative to CES, ManifoldShap values are closer to the ground truth than Interventional Shapley values for $\epsilon\in\{\epsilon^{(1.0)}, \epsilon^{(0.9)}, \epsilon^{(0.85)}\}$. When $\epsilon = \epsilon^{(0.80)}$, the ManifoldShap values for $\hat{\phi}_2$ are no longer concentrated around the ground truth value 0, as the manifold $\man$ excludes a significant number of input samples. This shows that ManifoldShap values may become inaccurate when the set $\Z$ becomes `small' relative to the true data manifold.
    
    \subsubsection{ManifoldShap vs RJBShap}\label{subsec:rjbshap}
    \begin{figure}[t]
        \centering
        \includegraphics[height=1.5in]{./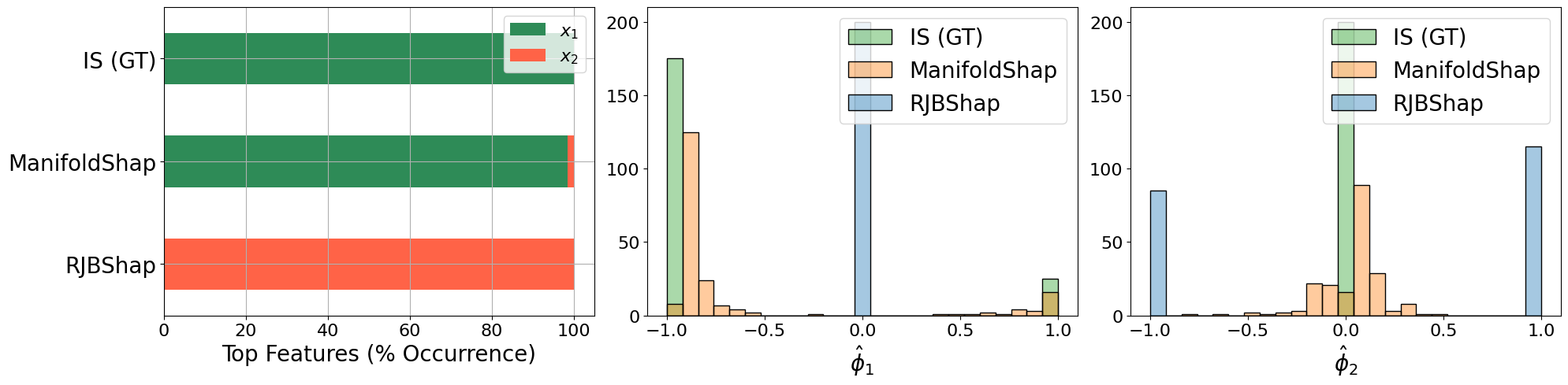}
        \caption{\textbf{Left (a):} Top features according to different Shapley value functions. \textbf{Right (b):} Histograms of computed Shapley values using different value functions. `IS (GT)' refers to the ground truth Interventional Shapley values.}
        \label{fig:rjbshap}
    \end{figure}
    In this experiment, we demonstrate that RJBShap provides explanations for  $\tilde{f}_p(\x)\coloneqq f(\x)p(\x)$ which is fundamentally different from $f(\x)$. The explanations obtained can therefore be misleading.
    
    \paragraph{Data generating mechanism.}
    In this experiment, $\mathcal{X} \subseteq \mathbb{R}^2$ and $\mathcal{Y} \subseteq \mathbb{R}$. Specifically,
    \begin{align*}
        \X &\sim \mathcal{N}\left(\begin{pmatrix}
			0 \\
			0
		\end{pmatrix}, \begin{pmatrix}
			1 & 0 \\
			0 & 1
		\end{pmatrix}\right)
    \end{align*}
    Moreover, the ground truth model under consideration is $Y\coloneqq f(\X) = \exp(X^2_1/2)$. The model is deliberately chosen so that $f(\X)p(\X)$ only depends on $X_2$, while $f(\X)$ only depends on $X_1$. Therefore, the Interventional Shapley value for feature 2 is 0, i.e., $\phi_2 = 0$. In contrast, the RJBShap value for feature 1 is 0, since $\tilde{f}_p(\X)$ is independent of feature 1. Therefore, if we use RJBShap to explain the function $f(\X)$, we would be misled into concluding that the function is independent of feature 1, when in reality the function is independent of feature 2.
    
    For this experiment, we use $\Z=\alphaman$ with $\alpha=1-10^{-3}$ to compute ManifoldShap, and use the ground truth data density to compute RJBShap values.
    \paragraph{Results.}
    Using the data-generating mechanism defined above, we generate datapoints $\{\x^{(i)} \}_{i=1}^{500}$, and compute Shapley values for these datapoints.
    Figure \ref{fig:rjbshap}a shows the top features according to different value functions. The results confirm that, for RJBShap the top feature is feature 2 for all datapoints, whereas for ground truth Interventional Shapley value feature 1 is most important for all datapoints. ManifoldShap values remain significantly closer to ground truth IS values, as over 98\% datapoints have feature 1 as the most important feature. 
    
    Figure \ref{fig:rjbshap}b shows the histograms of Shapley values for different value functions. For a fair comparison, we normalise the Shapley values so that $\sum_{i\in\{1, 2\}} |\phi_i| = 1$. It can be seen that ManifoldShap values are very close to the ground truth IS values, while the RJBShap values provide a stark contrast to the ground truth IS values. For IS values, $|\phi_1| = 1$ and $\phi_2 = 0$ which accurately reflects the fact that the function $f$ only depends on feature 1. In contrast, for RJBShap, $\phi_1 = 0$ and $|\phi_2| = 1$.
    
    \subsubsection{Accuracy with increasing feature space dimensions}\label{subsec:feature_dims}
    In this experiment, we illustrate how the accuracy of computed Shapley values varies with increasing dimensions of the feature space. Here, we consider dimensions of feature space $d\in \{100, 200, 500\}$.
    \paragraph{Data generating mechanism.}
    Here, $\X \in \mathbb{R}^d$ and $\mathcal{Y} \subseteq \mathbb{R}$. Specifically,
    \begin{align*}
        \X  &\sim\mathcal{N}(\textbf{0}_d, \Sigma^d), \quad \textup{where} \quad \Sigma^d_{ij} = \ind(i=j) +  0.9\ind(i\neq j)\\
        \quad Y &= X_1.
    \end{align*}
    In this example, the correlation between any two features is 0.90. This high positive correlation among features restricts support size of the data. 
    Additionally, the model under consideration is the perturbed model
    \begin{align*}
        f(\X) = Y + 10\,X_2 \ind(\X \not\in \alphaman).
    \end{align*}
    We use VAEs to estimate $\alphaman$ as described in Section \ref{subsec:computingman}, and choose $\alpha=1-10^{-3}$.

    Like in previous experiments, the model only depends on the first feature $X_1$ 
    on the $\alpha$-manifold, i.e., $f(\X) = X_1$ when $\X \in \alphaman$. Since $\alphaman$ contains $99.9\%$ of the data, the mean squared error of $f(\X)$ is very small (of the order $O(10^{-3})$).

    \paragraph{Results.}
    \begin{figure}[t]
        \centering
        \includegraphics[height=1.5in]{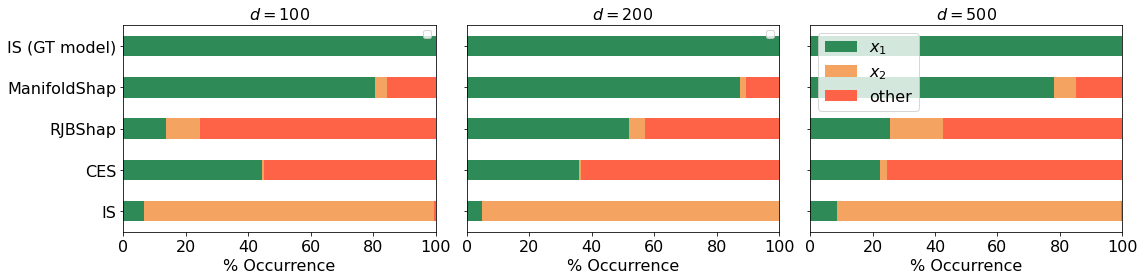}
        \caption{Top features according to different Shapley value functions for increasing feature dimensions $d$. `IS (GT)' refers to the ground truth Interventional Shapley values.}
        \label{fig:dimension_exps}
    \end{figure}
    Using the data generating mechanism described above, we generate datapoints $\{\x^{(i)} \}_{i=1}^{500}$, and compute Shapley values for these datapoints. In this example, we use the supervised approach in Section \ref{subsec:CES-comp} to compute the conditional expectation for CES values, and we use the rejection sampling procedure in Section \ref{subsec:rejection_sampling} to estimate ManifoldShap values.  

    Figure \ref{fig:dimension_exps} shows the top features according to different value functions for $d \in \{100, 200, 500\}$. The results show that, IS values attribute greater importance to feature 2 for more than 90\% of the datapoints, for all values of $d$ under consideration, while the remaining datapoints have feature 1 as the most important feature. 
    Similarly, both RJBShap and CES attribute greatest importance to feature 1 for less than 50\% of the datapoints. Moreover, the importance that CES attributes to feature 1 decreases with increasing dimensions. Intuitively, this happens because the number of features highly correlated with feature 1 increases as $d$ increases, leading CES to divide the attributions among increasing number of features. 
    Among all the baselines considered, ManifoldShap remains closest to the ground truth Shapley values as it attributes greatest importance to feature 1 for more than 80\% of the datapoints even as $d$ increases. 
    \subsubsection{Sensitivity of ManifoldShap and RJBShap to density estimation errors}\label{subsec:sensitivity-density-error}
    In this experiment we investigate the sensitivity of computed Shapley values using ManifoldShap and RJBShap, to increasing density estimation errors. 
    Here, $\X \in \mathbb{R}^{50}$ and $\mathcal{Y} \subseteq \mathbb{R}$. Specifically, we use the same data generating mechanism and model used in Section \ref{subsec:feature_dims} with $d=50$. 
    We use VAEs to obtain a density estimate $\hat{p}(\x)$, which is subsequently used to estimate RJBShap and $\alphaman$ for ManifoldShap estimation.  
    We generate datapoints $\{\x^{(i)}\}_{i=1}^{500}$, and compute ManifoldShap and RJBShap values for these datapoints, using density estimates of differing quality obtained by training VAE for different number of epochs. 

    Table \ref{tab:vae_iters} shows the percentage of datapoints with feature 1 as the most important feature as per each value function, for different density estimates. We use the oracle density of $\X$ to estimate density mean squared error in table \ref{tab:vae_iters}. It can be seen that ManifoldShap is significantly less sensitive to density estimation errors as compared to RJBShap. This is because ManifoldShap only depends on the density estimate via the indicator $\ind(\hat{p}(\x)\geq \epsilon^{(\alpha)})$, whereas RJBShap depends on the density explicitly. 

    \begin{table}
      \centering
        \caption{The percentage of datapoints with feature 1 as the most important feature as per each value function, for different density estimates obtained by training VAE for different number of epochs.}
    \label{tab:vae_iters}
        \begin{tabular}[width=0\textwidth]{lllll}
        \toprule
        No. of epochs & 10 & 50 & 200 & Oracle density \\
        \midrule
        ManifoldShap           &                    \textbf{79.9}&                    \textbf{78.5}& \textbf{80.2} & \textbf{81.0}\\
        RJBShap &          17.5 &         15.0 &        10.0 & 13.1\\
        \midrule
        Density MSE & 395.1 & 220.0 & 63.7 & 0.0 \\
        \bottomrule
        \end{tabular}
    \end{table}

\end{document}